\newcommand{\diff}{\mathrm{d}}
\DeclareMathOperator*{\argmax}{arg\,max}
\renewcommand\vec[1]{\boldsymbol{#1}}
\newcommand\mat[1]{\mathbf{#1}}
\theoremstyle{plain}
\newcommand{\thistheoremname}{}
\newtheorem*{genericthm*}{\thistheoremname}
\newenvironment{namedthm*}[1]
  {\renewcommand{\thistheoremname}{#1}%
   \begin{genericthm*}}
  {\end{genericthm*}}
\newtheorem{theorem}{Theorem}[section]
\newtheorem{proposition}[theorem]{Proposition}
\theoremstyle{definition}
\newtheorem{definition}[theorem]{Definition}
\theoremstyle{remark}
\newtheorem{remark}[theorem]{Remark}
\title{Iterated INLA for State and Parameter Estimation in\\Nonlinear Dynamical Systems}
\author[1]{Rafael~Anderka}
\author[1]{Marc~Peter~Deisenroth}
\author[1,2]{So~Takao}
\affil[1]{%
    Centre for Artificial Intelligence\\
    University College London\\
    London, UK
}
\affil[2]{%
    Department of Computing and Mathematical Sciences\\
    California Institute of Technology\\
    Pasadena, CA
}
\begin{document}
\maketitle

\begin{abstract}
  Data assimilation (DA) methods use priors arising from differential equations to robustly interpolate and extrapolate data. Popular techniques such as ensemble methods that handle high-dimensional, nonlinear PDE priors focus mostly on state estimation, however can have difficulty learning the parameters accurately. On the other hand, machine learning based approaches can naturally learn the state and parameters, but their applicability can be limited, or produce uncertainties that are hard to interpret. Inspired by the Integrated Nested Laplace Approximation (INLA) method in spatial statistics, we propose an alternative approach to DA based on iteratively linearising the dynamical model. This produces a Gaussian Markov random field at each iteration, enabling one to use INLA to infer the state and parameters. Our approach can be used for arbitrary nonlinear systems, while retaining interpretability, and is furthermore demonstrated to outperform existing methods on the DA task. By providing a more nuanced approach to handling nonlinear PDE priors, our methodology offers improved accuracy and robustness in predictions, especially where data sparsity is prevalent.
\end{abstract}

\section{Introduction}\label{sec:intro}
Physics-based modelling plays a major role in science and engineering even in today's landscape of machine learning, where heavy emphasis is placed on data-driven modelling. In applications, such as numerical weather prediction (NWP), the number of observations received daily, while plentiful, pales in comparison to the sheer dimensionality of the state---typically of the order of $\mathcal{O}(10^9)$, while the number of observations is of the order $\mathcal{O}(10^7)$ \cite{metofficeDA}. Under such data-scarce regimes, it is crucial to incorporate expert knowledge into models so that we can extrapolate in regions outside of the data distribution (Figure \ref{fig:extrapolation-comparison}), while equipping them with sound uncertainty estimates.

\begin{figure}[ht]
    \centering
    \begin{subfigure}[t]{0.3015\linewidth}
        \centering
        \hspace{-0mm}\includegraphics[width=\textwidth]{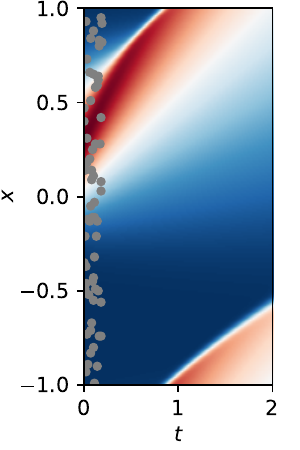}\vspace{-0mm}
        \caption{Ground truth}
    \end{subfigure}~%
    \begin{subfigure}[t]{0.3015\linewidth}
        \centering
        \hspace{-0mm}\includegraphics[width=\textwidth]{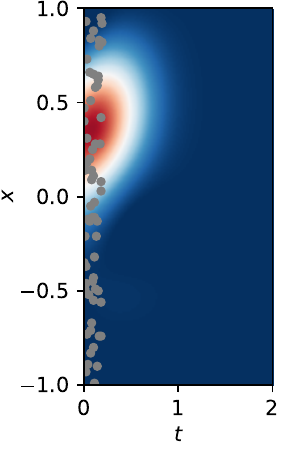}\vspace{-0mm}
        \caption{GPR}
    \end{subfigure}
    \begin{subfigure}[t]{0.3769\linewidth}
        \centering
        \hspace{-0mm}\includegraphics[width=\textwidth]{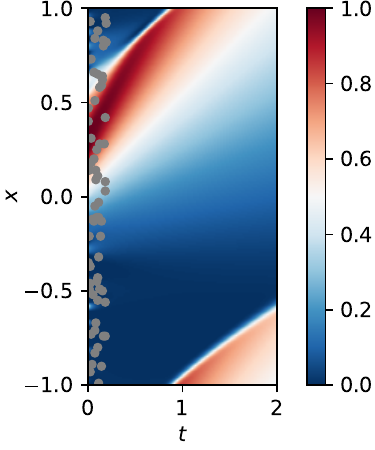}\vspace{-0mm}
        \caption{Iterated INLA}
    \end{subfigure}
    \caption{Comparison of the predictions made from a non-physics-informed model (Gaussian process regression / GPR) vs. a physics-informed model (iterated INLA). The ground truth is a simulation of the 1D Burgers' equation.
    Gray dots are observation locations.}
    \label{fig:extrapolation-comparison}
\end{figure}

Data assimilation (DA) techniques have been devised to infer the state from data, when the model takes the form of a differential equation. In particular, when the dynamical model is linear Gaussian, one can solve the problem exactly using the Kalman filter/smoother; in nonlinear settings, one can compute approximate solutions via the extended Kalman filter\slash smoother or particle-based methods. However, in large-scale problems, such as NWP, these techniques are intractable due to their high computational and memory costs. This calls for further approximations, such as the ensemble Kalman filter\slash smoother or variational methods \citep{evensen2022data}; however, they have their respective drawbacks. For example, ensemble Kalman methods make Gaussian approximations to non-Gaussian posteriors using a small number of particles, which can lead to noisy correlations and systematically small variances \citep{bannister2017review};  variational algorithms, by themselves, do not even provide uncertainty estimates. Furthermore, NWP models often contain multiple parameters that are manually adjusted by the modellers, which is an area that can be automated using machine learning \citep{schneider2017earth}. For example, ensemble Kalman methods can be extended to jointly infer the state and model parameters by augmenting the state vector (\cite{evensen2009ensemble, bocquet2013joint}). However, this can only provide Gaussian approximations to the non-Gaussian joint posteriors, which can lead to instability. Further, it has been observed that ensemble Kalman methods struggle to accurately estimate parameters associated with stochastic terms in the model \citep{delsole2010state}.

Alternatively, there has been a recent surge of interest in integrating physical knowledge into machine learning (ML) models. Physics-informed neural networks (PINNs) introduced by \cite{raissi2019physics} for example, use neural networks to parametrically represent the model state to estimate the state and parameters given data, by minimising a tailored loss function comprising a data fit term and a PDE residual term. Analogous ML-based methods using Gaussian processes (GPs) have also been proposed, such as AutoIP \citep{long2022autoip}, which replaces the neural network in PINNs by GPs to equip predictions with uncertainties using variational inference. A similar method has been proposed by \cite{chen2021solving} that also provides convergence guarantees in the limit of increasing collocation points. However, all these methods encode the PDE knowledge artificially through the {\em likelihood}, instead of directly embedding them in the prior, which makes interpretability of their predicted uncertainties harder. On the other hand, latent force models \citep{alvarez2009latent}, or the recent work by \cite{nikitin2022non} injects ODE/PDE knowledge directly into the GP prior by designing physics-informed kernels. However, their approach is limited to modelling linear PDEs, such as the heat and wave equations.

We overcome the limitations posed above by proposing a DA method inspired by the work \cite{rue2013r}. This is a statistical inference technique that represents GP priors by a Gaussian Markov Random Field (GMRF), enabling the posterior marginals on the state and model hyperparameters to be inferred efficiently using Integrated Nested Laplace Approximation (INLA) \citep{rue-inla}. We extend their method to handle nonlinear PDE priors by iteratively linearising the PDE to produce GP approximations to the prior at each iteration, and subsequently using INLA to update the state and parameter estimates. In particular, this allows us to (1) use priors derived from arbitrary nonlinear PDEs without having to encode them in the likelihood, and (2) provide accurate non-Gaussian estimates to the state and posterior marginals, going beyond the setting of Gaussian approximations, necessarily imposed in variational inference or ensemble Kalman methods.

\section{Background}

Physical systems are typically modelled by ordinary or partial differential equations. These are often augmented by stochastic noise terms to account for model uncertainty. In particular, for $0 < T < \infty$, consider a stochastic model over $t \in [0, T]$ of the form
\begin{align}
    &\frac{\partial u}{\partial t}(x, t) = \mathcal{M}[u](x, t) + \dot{\mathcal{W}}(t, x), \quad x \in \mathring D, \label{eq:dynamics-model} \\
    &\mathcal{B}u(x, t) = 0, \quad x \in \partial D, \label{eq:boundary-condition} \\ 
    &u(x, 0) \sim \mathcal{N}(u_b, \mathcal{C}), \label{eq:initial-condition}
\end{align}
where \eqref{eq:dynamics-model} expresses the differential equation satisfied by $u(t,\cdot)$ in the interior of a compact domain $D$, perturbed by a spatio-temporal noise term $\dot{\mathcal{W}}$; \eqref{eq:boundary-condition} expresses the boundary conditions for a given operator $\mathcal{B}$ (e.g., the identity map or spatial derivatives), and \eqref{eq:initial-condition} describes the initial condition, which we model as a Gaussian random element with mean $u_b$ and covariance $\mathcal{C}$. Denoting by $V$ a topological vector space where the solution $u(t, \cdot)$ to \eqref{eq:dynamics-model}--\eqref{eq:initial-condition} lives for $t \in [0, T]$, we take $\mathcal{M}$ in \eqref{eq:dynamics-model} to be a possibly nonlinear operator on $V$ that may additionally depend on a set of parameters $\vec{\theta}$ that are a priori unknown. Equations \eqref{eq:dynamics-model}--\eqref{eq:initial-condition} encode the prior knowledge we have about our state $u(t, x)$.

Consider observations $\vec{y}(t) \in \mathbb{R}^{d_y}$ of $u(t, x)$, modelled by the likelihood
\begin{align}\label{eq:likelihood}
p(\vec{y}_t | u_t) = \mathcal{N}(\vec{y}_t | \vec{h}_t(u_t), \sigma_y^2 \mat{I}),
\end{align}
where $\vec{h}_t : V \rightarrow \mathbb{R}^{d_y}$ is some observation operator at time $t$, which we assume to be linear. $\sigma_y$ is the standard deviation of the observation noise, which may be unknown and can be included in the set $\vec{\theta}$. At a high level, we are interested in computing an updated belief of the state $u(x, t)$ and parameters $\vec{\theta}$, given the observations $\{\vec{y}_t\}_{t\in [0, T]}$.
Specifically, consider a numerical discretisation of \eqref{eq:dynamics-model}--\eqref{eq:likelihood}, which we express in state-space form
\begin{align}
    \vec{u}_{n} &= \vec{f}_{\vec{\theta}}(\vec{u}_{n-1}) + \vec{\epsilon}_n, \quad \vec{\epsilon}_n \sim \mathcal{N}(\vec 0, \mat{Q}), \\
    \vec{y}_n &= \mat{H}_n \vec{u}_n + \vec{\eta}_n, \,\,\qquad \vec{\eta}_n \sim \mathcal{N}(\vec 0, \mat{R}),
\end{align}
for $n = 1, \ldots, N_t$ and $\vec{u}_0 \sim \mathcal{N}(\vec{u}_b, \mat{C})$.
Here, $\vec{u}_b, \vec{u}_n \in \mathbb{R}^{d_u}$ are discretisations of the fields $u_b(x)$ and $u(t_n, \cdot)$ for $0 = t_0 < t_1 < \ldots < t_N = T$, the matrices $\mat{Q} \in \mathbb{R}^{d_u \times d_u}$, $\mat{R} \in \mathbb{R}^{d_y \times d_y}$ are the process and observation noise covariances, and $\vec{f}_{\vec{\theta}} : \mathbb{R}^{d_u} \rightarrow \mathbb{R}^{d_u}$, $\mat{H}_n \in \mathbb{R}^{d_y \times d_u}$ are the discretised dynamics and observation operators, respectively. Then letting $\vec{y} := \{\vec{y}_1, \ldots, \vec{y}_{N_t}\}$, we wish to compute  
\begin{align}
    p(u_n^i | \vec{y}, \vec{\theta}) \propto \int p(\vec{y}_n | \vec{u}_n, \vec{\theta}) p(\vec{u}_n | \vec{\theta}) \mathrm{d}\vec{u}_{n}^{\backslash i} \label{eq:state-posterior-given-theta}
\end{align}
for $i = 1, \ldots, d_u$ and $n = 1, \ldots, N_t$ if $\vec{\theta}$ is known, or 
\begin{align}
    p(\vec{\theta} | \vec{y}) &\propto \int p(\vec{y} | \vec{u}, \vec{\theta}) p(\vec{\theta}) \mathrm{d} \vec{u}, \label{eq:param-posterior}\\
    p(u_n^i | \vec{y}) &= \int p(u_n^i | \vec{y}, \vec{\theta}) p(\vec{\theta} | \vec{y}) \mathrm{d}\vec{\theta}, \label{eq:state-posterior}
\end{align}
where $\vec{u} := (\vec{u}_1, \ldots, \vec{u}_{N_t})$, if $\vec{\theta}$ is unknown. We refer to this as the data assimilation (DA) problem.

\subsection{Ensemble and Variational DA}
In general, the distributions \eqref{eq:state-posterior-given-theta}--\eqref{eq:state-posterior} can be approximated arbitrarily well using sequential Monte-Carlo (SMC) techniques \citep{chopin2020introduction}. However, this can be expensive to compute and moreover suffers from a so-called ``weight-collapse" when $d_u >\!\!> 1$, making it unreliable to use in high-dimensional settings \citep{bengtsson2008curse}.
The ensemble Kalman filter/smoother (EnKF/S) \citep{evensen1994sequential, evensen2000ensemble} has been proposed as an appealing alternative in high dimensions, which, like SMC, uses particles to empirically approximate the state distribution, but differs from it by only using information about the first two moments of the particles to condition on data. This effectively employs a Gaussian approximation to all distributions arising in the computations, making it more akin to the extended Kalman filter/smoother (ExKF/S). However, by taking the number of particles to be much smaller than $d_u$, computations in EnKF/S can be performed much more efficiently than in ExKF/S, enabling its use in high-dimensional problems such as weather forecasting. However, using a small number of particles can result in inaccurate uncertainty estimates \citep{bannister2017review}. Moreover, when we jointly infer the parameters via state-augmentation \citep{evensen2009ensemble}, parameters of the process noise cannot be inferred accurately \citep{delsole2010state}.

Variational methods, such as 4D-Var \citep{le1986variational}, on the other hand reduce the Bayesian inference problem \eqref{eq:state-posterior-given-theta} to a MAP estimation problem, where one seeks to minimise a cost functional of the form (in this case, the {\em weak-constraint 4D-Var loss} $J = -\log p(\vec{u} | \vec{y}, \vec{\theta})$):
\begin{align}\label{eq:4dvar-loss}
&J[\vec{u}; \vec{\theta}] := \frac12 \sum_{n=1}^{N_t} \|\vec{y}_n - \mat{H}_n \vec{u}_n\|_{\mat{R}^{-1}}^2 \\
&\quad + \frac12 \sum_{n=1}^{N_t} \|\vec{u}_{n} - \vec{f}_{\vec{\theta}}(\vec{u}_{n-1})\|^2_{\mat{Q}^{-1}} + \frac12 \|\vec{u}_0 - \vec{u}_b\|^2_{\mat{C}^{-1}}.\nonumber
\end{align}
Here we used the shorthand $\|\vec{v}\|_{\mat{A}}^2 := \vec{v}^\top \mat{A} \vec{v}$.
% Here, the first term is a data fit term and the second and third terms are regularisation terms that enforce $u_t$ to approximately conform to the dynamical model \eqref{eq:dynamics-model}--\eqref{eq:initial-condition}. 
Optimising the cost \eqref{eq:4dvar-loss}, using e.g. a quasi-Newton method \citep{evensen2022data}, is tractable for high-dimensional problems. However, a shortcoming of the approach is that it does not directly provide any form of uncertainty estimates on $\vec{u}$.
% In addition, the framework does not provide a systematic way to infer model parameters from data.

\subsection{INLA} \label{sec:INLA-review}

In spatial statistics, the integrated nested Laplace approximation (INLA) \citep{rue-inla} is a commonly used Bayesian inference method for latent Gaussian models. INLA considers a hierarchical Bayesian model of the form
\begin{align}
    &\vec{\theta} \sim p_\Theta(\cdot), \\
    &\vec{u} | \vec{\theta} \sim \mathcal{N}(\vec{\mu}_{\vec{u}}(\vec{\theta}), \mat{P}^{-1}_{\vec{u}}(\vec{\theta}))
\end{align}
for some distribution $p_\Theta$ that is not necessarily Gaussian;  $\vec{\mu}_{\vec{u}}(\vec{\theta}), \mat{P}_{\vec{u}}(\vec{\theta})$ are the mean vector and precision matrix of the latent process $\vec{u}$ conditioned on $\vec{\theta}$. Given the likelihood $p(\vec{y} | \vec{u}, \vec{\theta})$, INLA approximates the marginal posteriors $\{p(u_i | \vec{y})\}_{i=1}^{d_u}$ in \eqref{eq:state-posterior} by numerical integration
\begin{align}\label{eq:inla-marginal-state}
    p(u_i | \vec{y}) \approx \sum_{k=1}^K p(u_i | \vec{y}, \vec{\theta}_k) p(\vec{\theta}_k | \vec{y}) \Delta_k,
\end{align}
where $\{\vec{\theta}_k\}_{k=1}^K$ are $K$ quadrature nodes for numerically integrating in $\vec{\theta}$-space, and $\{\Delta_k\}_{k=1}^K$ are volume elements in $\vec{\theta}$-space.
When the likelihood is Gaussian, i.e., $p(\vec{y} | \vec{u}, \vec{\theta}) = \mathcal{N}(\vec{y} | \mat{H} \vec{u}, \mat{R})$ for some matrix $\mat{H}$ and noise covariance $\mat{R}$, then by standard computation, the posterior $p(\vec{u} | \vec{y}, \vec{\theta}) = \mathcal{N}(\vec{u} | \vec{\mu}_{\vec{u} | \vec{y}}(\vec{\theta}), \mat{P}^{-1}_{\vec{u} | \vec{y}}(\vec{\theta}))$ has the closed-form expression
\begin{align}
    &\mat{P}_{\vec{u} | \vec{y}}(\vec{\theta}) = \mat{P}_{\vec{u}}(\vec{\theta}) + \mat{H}^\top \mat{R}^{-1}\mat{H} \label{eq:precision-u|y}\\
    &\vec{\mu}_{\vec{u} | \vec{y}}(\vec{\theta}) =  \vec{\mu}_{\vec{u}}(\vec{\theta}) +\mat{P}_{\vec{u} | \vec{y}}(\vec{\theta})^{-1}\mat{H}^\top\mat{R}^{-1}\!(\vec{y} - \mat{H} \vec{\mu}_{\vec{u}}(\vec{\theta})). \label{eq:mean-u|y}
\end{align}
Provided that $\vec{u}$ is a GMRF, so that $\mat{P}_{\vec{u}}(\vec{\theta})$ is sparse, and assuming that $\mat{H}^\top \mat{R}^{-1}\mat{H}$ is also sparse, then the posterior mean \eqref{eq:mean-u|y} can be computed efficiently using a sparse Cholesky solver and the marginal posterior variances can be computed by Takahashi recursions \citep{takahashi1973formation, rue2007approximate} (see Appendix \ref{app:sparse-linalg} for details).
For the marginal posterior on the parameters $p(\vec{\theta} | \vec{y})$, the following approximation is considered
\begin{align}\label{eq:approx-theta-y}
    \tilde{p}(\vec{\theta} | \vec{y}) \propto \left.\frac{p(\vec{u}, \vec{y}, \vec{\theta})}{\tilde{p}_G(\vec{u} | \vec{y}, \vec{\theta})}\right|_{\vec{u} = \vec{u}^*(\vec{\theta})},
\end{align}
where $\tilde{p}_G(\vec{u} | \vec{y}, \vec{\theta})$ is a Gaussian approximation to $p(\vec{u} | \vec{y}, \vec{\theta})$. In the Gaussian likelihood case, this is just $p(\vec{u} | \vec{y}, \vec{\theta})$. We also denoted $\vec{u}^*(\vec{\theta}) := \argmax_{\vec{u}}[\log p(\vec{u} | \vec{y}, \vec{\theta})]$.
Finally, the quadrature nodes $\vec{\theta}_k$ in \eqref{eq:inla-marginal-state} are selected from a regular grid in a transformed $\vec{\theta}$-space, such that it satisfies
\begin{align}\label{eq:acceptance-criteria}
    |\log \tilde{p}(\vec{\theta}^* | \vec{y}) - \log \tilde{p}(\vec{\theta}_k | \vec{y})| < \delta,
\end{align}
where $\vec{\theta}^* := \argmax_{\vec{\theta}}[\log p(\vec{\theta} | \vec{y})]$ and for some acceptance threshold $\delta > 0$. We provide details of the selection criteria in Appendix \ref{app:quadrature-node-selection} and details for evaluating the expression \eqref{eq:approx-theta-y} in Appendix \ref{app:computin-p-theta-y}.

\section{Iterated INLA for Nonlinear DA}
While INLA is typically employed for latent fields that are modelled by GMRFs, here, we extend its applicability to particular non-Gaussian fields, namely, those generated by nonlinear SPDEs. By doing so, we obtain a new, principled method for jointly inferring the state and parameters in nonlinear dynamical systems.

\subsection{Linear setting}\label{sec:linear-model-setting}
Before considering the general setting of nonlinear SPDE priors, let us first consider the case when $\mathcal{M}$ in \eqref{eq:dynamics-model} is a linear differential operator. 
In this setting, one can build a GMRF representation of $u$ from this operator via the so-called {\em SPDE approach} by \cite{lindgren2011explicit}. To do this, we first discretise the differential operator
\begin{align}\label{eq:L-operator}
    \mathcal{L} := \frac{\partial}{\partial t} - \mathcal{M},
\end{align}
using e.g., finite differences, which results in a sparse, banded matrix $\mathcal{L} \approx \mat{L} \in \mathbb{R}^{N \times N}$. Upon discretising with finite differences, the SPDE \eqref{eq:dynamics-model}--\eqref{eq:initial-condition} can be approximated by a random matrix-vector system (see Appendix \ref{app:discretisation})
\begin{align}\label{eq:linear-spde}
    \mat{L} \vec{u} = \vec{\xi},
\end{align}
where $\vec{\xi} \sim \mathcal{N}(0, \bar{\mat{Q}})$, for some positive definite matrix $\bar{\mat{Q}}$, which numerically represents the covariance structure of the space-time noise process $\dot{\mathcal{W}}$ in \eqref{eq:dynamics-model}. For space-time white noise process, this simply reads $\bar{\mat{Q}} = \frac{\sigma_u^2}{\Delta t \Delta x} \mat{I}$, where $\sigma_u > 0$ is the spectral density of the noise, which can be treated as another unknown parameter in the set $\vec{\theta}$ (see Appendix \ref{app:discretisation} for the derivation).
If the matrix $\mat{L}^\top \bar{\mat{Q}}^{-1}\mat{L}$ is invertible, then we deduce from \eqref{eq:linear-spde} that
\begin{align}
\vec{u} \sim \mathcal{N}(\vec{0}, (\mat{L}^\top \bar{\mat{Q}}^{-1}\mat{L})^{-1}),
\end{align}
which is a GMRF if the prior precision $\mat{P}_{\vec{u}} := \mat{L}^\top \bar{\mat{Q}}^{-1}\mat{L}$ is sparse.
Given observations $\vec{y}$ of $\vec{u}$, we directly apply INLA (Section \ref{sec:INLA-review}) to infer the marginal posteriors $p(u_i | \vec{y})$ of the state, and if the model $\mathcal{M}$ also contains some unknown parameters $\vec{\theta}$, then its marginal posteriors $p(\theta_j | \vec{y})$ as well.

\subsection{Nonlinear setting}
In the nonlinear setting, we aim to follow a similar strategy by constructing a GMRF from the model \eqref{eq:dynamics-model}--\eqref{eq:initial-condition} and using INLA to jointly estimate the state $u$ and parameter $\vec{\theta}$ from the data $\vec{y}$. However, the nonlinearity of the operator $\mathcal{M}$ leads to non-Gaussianity of $u$, preventing us from directly obtaining a GMRF representation of $p(u)$ by discretisation, as we saw in Section \ref{sec:linear-model-setting}. To overcome this, we adopt an iterative strategy, whereby at each iteration $n$, we consider a Gaussian approximation to $p(u)$ by linearising the model $\mathcal{M}$ around a point $u_0^{(n)}$. That is,
\begin{align}
    \mathcal{M}[u] &\approx \mathcal{M}[u_0^{(n)}] + \mathcal{M}_0^{(n)}(u-u_0^{(n)}) \\
    &= (\mathcal{M}[u_0^{(n)}] - \mathcal{M}_0^{(n)} u_0^{(n)}) + \mathcal{M}_0^{(n)} u
\end{align}
for some linear operator $\mathcal{M}_0^{(n)}$. Then, the spatio-temporal operator $\mathcal{L}$ in \eqref{eq:L-operator} can be approximated by an affine operator
\begin{align}
    \mathcal{L}[u] \approx \mathcal{L}_0^{(n)} u - r_0^{(n)}, \label{eq:linearised-model}
\end{align}
where
\begin{align}
    \mathcal{L}_0^{(n)} u &:= \frac{\partial u}{\partial t} - \mathcal{M}_0^{(n)} u, \quad \text{and} \\
    r_0^{(n)} &:= \mathcal{M}[u_0^{(n)}] - \mathcal{M}_0^{(n)} u_0^{(n)} \\
    &= \mathcal{L}_0^{(n)} u_0^{(n)} - \mathcal{L}[u_0^{(n)}].
\end{align}

Now, considering a finite-difference discretisation in space-time, denote by $\vec{u}, \vec{r}^{(n)}$ the corresponding vector representation of the fields $u, r_0^{(n)}$, and by $\mat{L}^{(n)}$ the corresponding matrix representation of the linear operator $\mathcal{L}_0^{(n)}$. By \eqref{eq:linearised-model}, this gives us the following approximation to system \eqref{eq:dynamics-model}--\eqref{eq:initial-condition}
\begin{align}
    \mat{L}^{(n)} \vec{u} = \vec{r}^{(n)} + \vec{\xi},
\end{align}
where $\vec{\xi} \sim \mathcal{N}(\vec 0, \bar{\mat{Q}})$ is the discretised noise process. Hence, as in the linear setting, we find the following Gaussian approximation to $p(\vec{u})$ at the $n$-th iteration:
\begin{align}\label{eq:linearised-prior}
    \tilde{p}_G^{(n)}(\vec{u}) = \mathcal{N}(\vec{u} \,|\, (\mat{L}^{(n)})^{-1}\vec{r}^{(n)}, \,(\mat{L}^{(n)\top}\bar{\mat{Q}}^{-1}\mat{L}^{(n)})^{-1}).
\end{align}
This is a GMRF, provided that the approximate prior precision $\mat{P}^{(n)}_{\vec{u}} := \mat{L}^{(n)\top}\bar{\mat{Q}}^{-1}\mat{L}^{(n)}$ is sparse.
In practice, we compute the mean in \eqref{eq:linearised-prior} as $(\mat{L}^{(n)\top}\mat{L}^{(n)})^{-1}\mat{L}^{(n)\top}\vec{r}^{(n)}$, which we found to be more numerically stable.
We can further compute the corresponding posterior $\tilde{p}_G^{(n)}(\vec{u} | \vec{y}, \vec{\theta})$ using \eqref{eq:precision-u|y}--\eqref{eq:mean-u|y}. With this, we are in position to apply INLA.

To summarise, our iterated INLA methodology entails (i) linearising our model $\mathcal{L}$ around a point $u_0^{(n)}$, (ii) obtain an approximate GMRF representation of the state $u$ using \eqref{eq:linearised-prior}, (iii) apply INLA on this model to compute an estimate for the marginal posteriors $p(u_i | \vec{y})$ and $p(\theta_j | \vec{y})$, and (iv) iterate steps (i)--(iii) with an updated linearisation point $u_0^{(n+1)}$.
In the following, we address this last point regarding how to update the linearisation point, depending on whether we know the model parameters or not.

\begin{remark}
    We note that our method is similar to the iterative INLA method in the \texttt{inlabru} R package \citep{inlabru} for handling nonlinear predictors in GLMs. The main difference is where the nonlinearity appears - In \texttt{inlabru}, this arises by directly taking nonlinear transformations to a GMRF prior, whereas in our setting the nonlinearity is inherent in the SPDE defining the prior. This subtle difference leads to different formalisms.
\end{remark}

\begin{algorithm}[tb]
   \caption{Iterated INLA with known parameters}
   \label{alg:iterated-INLA-1}
\begin{algorithmic}[1]
   \STATE {\bfseries Input:} observations $\vec{y}$, parameters $\vec{\theta}$, damping coeff.\ $\gamma$
   \STATE {\bfseries Initialise:} $u^{(0)}_0$, $n = 0$
   \WHILE{$\vec{u}^{(n)}_0$ has not converged}
   \STATE $\mathcal{L}^{(n)}_0 \leftarrow $ Linearise operator $\mathcal{L}$ around $u^{(n)}$
   \STATE $r^{(n)}_0 \leftarrow$ Compute residual $\mathcal{L}^{(n)}_0 u^{(n)} - \mathcal{L}[u^{(n)}]$
   \STATE $\mat{L}^{(n)}, \vec{r}^{(n)} \leftarrow $ Discretise $\mathcal{L}^{(n)}_0$ and $r^{(n)}_0$
   \STATE $\mat{P}_{\vec{u}}^{(n)}(\vec{\theta}) \leftarrow \mat{L}^{(n)\top}\bar{\mat{Q}}^{-1}\mat{L}^{(n)}$ \qquad (Prior precision)
   \STATE $\vec{\mu}_{\vec{u}}^{(n)}(\vec{\theta}) \leftarrow (\mat{L}^{(n)})^{-1}\vec{r}^{(n)}$ 
   \quad (Prior mean)
   \STATE $\mat{P}_{\vec{u}|\vec{y}}^{(n)}(\vec{\theta}) \leftarrow $ Equation \eqref{eq:precision-u|y} \quad \,\,(Posterior precision)
   \STATE $\vec{\mu}_{\vec{u}|\vec{y}}^{(n)}(\vec{\theta}) \leftarrow $ Equation \eqref{eq:mean-u|y} \quad (Posterior mean)
   \STATE $\vec{u}^{(n+1)}_0 \leftarrow (1 - \gamma)\vec{u}^{(n)}_0 + \gamma \vec{\mu}_{\vec{u}|\vec{y}}^{(n)}(\vec{\theta})$
   \STATE $n \leftarrow n+1$
   \ENDWHILE
   \STATE $\vec{v}_{\vec{u}|\vec{y}}^{(\infty)}(\vec{\theta}) \leftarrow $ Takahashi recursion on $\mat{P}_{\vec{u}|\vec{y}}^{(\infty)}(\vec{\theta})$
   \RETURN $\vec{\mu}^{(\infty)}_{\vec{u}|\vec{y}}(\vec{\theta}),\vec{v}_{\vec{u}|\vec{y}}^{(\infty)}(\vec{\theta})$
\end{algorithmic}
\end{algorithm}

\subsubsection{Known parameters}
In the case where the model parameters $\vec{\theta}$ are known, we choose to take the resulting posterior mean 
\begin{align}\label{eq:approx-posterior-mean}
\vec{\mu}_{\vec{u}|\vec{y}}^{(n)}(\vec{\theta}) := \mathbb{E}_{\tilde{p}_G^{(n)}(\vec{u}|\vec{y}, \vec{\theta})} [\vec{u}]
\end{align}
computed using \eqref{eq:mean-u|y} with prior \eqref{eq:linearised-prior},
as the next linearisation point $\vec{u}_0^{(n+1)}$ in the iteration. In practice, we perform a damped update of the form
\begin{align}
\vec{u}_0^{(n+1)} = (1 - \gamma)\vec{u}_0^{(n)} + \gamma \vec{\mu}_{\vec{u}|\vec{y}}^{(n)}(\vec{\theta}) \label{eq:damped-update-1}
\end{align}
to aid stability, where the parameter $\gamma \in (0, 1]$ is a tunable damping coefficient.
% We then use this new value to peform the model linearisation \eqref{eq:linearised-model} and iterate this process until convergence.
At convergence ($n=\infty$), we further compute the marginal posterior variances
\begin{align}       
    \vec{v}_{\vec{u}|\vec{y}}^{(\infty)}(\vec{\theta}) := \mathtt{diag}\left(\mat{P}_{\vec{u}|\vec{y}}^{(\infty)}(\vec{\theta})^{-1}\right)
\end{align}
using Takahashi recursion \citep{rue2007approximate}; this computation only has to be performed once at the end and not at every iteration. We summarise the full process in Algorithm \ref{alg:iterated-INLA-1}. 

Below, we show that updating $\vec{u}_0^{(n)}$ according to \eqref{eq:damped-update-1} is a sound choice, as it is identical to using the Gauss--Newton method to optimise the weak-constraint 4D-Var cost \eqref{eq:4dvar-loss}.
\begin{proposition}\label{eq:connection-to-4dvar}
    The damped update of the linearisation point $\vec{u}_0^{(n)}$ in \eqref{eq:damped-update-1} is equivalent to minimising the weak-constraint 4D-Var cost \eqref{eq:4dvar-loss} using Gauss--Newton.
\end{proposition}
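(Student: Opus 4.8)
The plan is to show that the Gauss--Newton iteration for minimising the weak-constraint 4D-Var cost $J[\vec{u};\vec{\theta}]$ in \eqref{eq:4dvar-loss} produces exactly the damped update \eqref{eq:damped-update-1}. Gauss--Newton works by linearising the residuals of a nonlinear least-squares objective around the current iterate and solving the resulting linear least-squares problem. So first I would write $J$ as a sum of squared-norm residual blocks: the data-fit terms $\vec{y}_n - \mat{H}_n\vec{u}_n$ (already linear in $\vec{u}$, hence unchanged under linearisation), the model terms $\vec{u}_n - \vec{f}_{\vec\theta}(\vec{u}_{n-1})$, and the initial-condition term $\vec{u}_0 - \vec{u}_b$. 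The only nonlinearity sits in $\vec{f}_{\vec\theta}$, so I would linearise $\vec{f}_{\vec\theta}$ around the current iterate $\vec{u}_0^{(n)}$, writing $\vec{f}_{\vec\theta}(\vec{u}_{n-1}) \approx \vec{f}_{\vec\theta}(\vec{u}_{0,n-1}^{(n)}) + \mat{F}^{(n)}_{n-1}(\vec{u}_{n-1} - \vec{u}_{0,n-1}^{(n)})$ where $\mat{F}^{(n)}_{n-1}$ is the Jacobian. The key observation is that this is precisely the discretised linearisation producing the affine operator in \eqref{eq:linearised-model}: stacking the per-timestep model residuals $\vec{u}_n - \vec{f}_{\vec\theta}(\vec{u}_{n-1})$ across $n$ and linearising is exactly the statement $\mat{L}^{(n)}\vec{u} - \vec{r}^{(n)}$, with $\bar{\mat{Q}}$ playing the role of the block-diagonal of $\mat{Q}$ (and the weighting $\|\cdot\|_{\mat{Q}^{-1}}^2$, together with the initial-condition block $\|\cdot\|_{\mat{C}^{-1}}^2$, assembling into $\bar{\mat{Q}}^{-1}$). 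This identification is the conceptual heart of the proof and I expect it to be the main obstacle, mostly bookkeeping: one must carefully match the finite-difference discretisation of $\mathcal{L} = \partial_t - \mathcal{M}$ and its linearisation to the time-stepping map $\vec{f}_{\vec\theta}$ and its Jacobian, and confirm the noise/covariance blocks line up, including the initial condition as the "$n=0$ row" of the system.

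Second, I would observe that after this linearisation $J$ becomes a quadratic $\tilde J^{(n)}[\vec{u}]$ in $\vec{u}$, namely
\begin{align}
\tilde J^{(n)}[\vec{u}] = \tfrac12\|\vec{y} - \mat{H}\vec{u}\|_{\mat{R}^{-1}}^2 + \tfrac12\|\mat{L}^{(n)}\vec{u} - \vec{r}^{(n)}\|_{\bar{\mat{Q}}^{-1}}^2.
\end{align}
Minimising a quadratic of this form is a standard computation: its minimiser solves the normal equations $(\mat{L}^{(n)\top}\bar{\mat{Q}}^{-1}\mat{L}^{(n)} + \mat{H}^\top\mat{R}^{-1}\mat{H})\vec{u} = \mat{L}^{(n)\top}\bar{\mat{Q}}^{-1}\vec{r}^{(n)} + \mat{H}^\top\mat{R}^{-1}\vec{y}$. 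Comparing with \eqref{eq:precision-u|y}--\eqref{eq:mean-u|y}, the left-hand matrix is exactly $\mat{P}_{\vec{u}|\vec{y}}^{(n)}(\vec\theta)$, and rearranging the right-hand side around the prior mean $\vec{\mu}_{\vec{u}}^{(n)}(\vec\theta) = (\mat{L}^{(n)})^{-1}\vec{r}^{(n)}$ (equivalently $\mat{L}^{(n)\top}\bar{\mat{Q}}^{-1}\mat{L}^{(n)}\vec{\mu}_{\vec{u}}^{(n)} = \mat{L}^{(n)\top}\bar{\mat{Q}}^{-1}\vec{r}^{(n)}$) yields precisely the posterior mean formula \eqref{eq:mean-u|y}. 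Hence the exact minimiser of $\tilde J^{(n)}$ is $\vec{\mu}_{\vec{u}|\vec{y}}^{(n)}(\vec\theta)$, i.e.\ the undamped Gauss--Newton step lands exactly on the INLA posterior mean.

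Finally, I would invoke the standard fact that Gauss--Newton with a step-size (line-search/damping) parameter $\gamma \in (0,1]$ takes the iterate a fraction $\gamma$ of the way from $\vec{u}_0^{(n)}$ toward the full Gauss--Newton step, i.e.\ $\vec{u}_0^{(n+1)} = \vec{u}_0^{(n)} + \gamma(\vec{\mu}_{\vec{u}|\vec{y}}^{(n)}(\vec\theta) - \vec{u}_0^{(n)}) = (1-\gamma)\vec{u}_0^{(n)} + \gamma\,\vec{\mu}_{\vec{u}|\vec{y}}^{(n)}(\vec\theta)$, which is exactly \eqref{eq:damped-update-1}. Putting the three steps together establishes the equivalence. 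The routine quadratic-minimisation algebra I would relegate to an appendix; the part worth spelling out in the main text is the discretisation-matching in the first step, since that is where the correspondence between "linearise the SPDE operator" and "linearise the 4D-Var model residual" is actually forged.
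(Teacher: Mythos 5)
Your proposal is correct and follows essentially the same route as the paper's proof: both hinge on identifying the stacked, linearised model residual with $\mat{L}^{(n)}\vec{u}-\vec{r}^{(n)}$ (with $\bar{\mat{Q}}$ the block-diagonal of $\mat{C},\mat{Q},\ldots,\mat{Q}$) and on recognising that the Gauss--Newton normal equations coincide with the Gaussian posterior mean/precision formulas \eqref{eq:precision-u|y}--\eqref{eq:mean-u|y}. The only difference is presentational: you characterise the damped Gauss--Newton step as a $\gamma$-fraction of the way to the minimiser of the local linearised least-squares subproblem, whereas the paper writes it as a preconditioned gradient step $\vec{u}_0^{(n)}-\gamma\mat{B}^{-1}\nabla J[\vec{u}_0^{(n)}]$ and rearranges algebraically; since the gradients of $J$ and of the local quadratic model agree at the linearisation point, these are the same update.
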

\begin{proof}
    Appendix \ref{app:connection-to-4dvar}.
\end{proof}
This implies firstly, that we have guaranteed convergence of our algorithm in the same setting where the Gauss--Newton method converges (e.g. Theorem 10.1 in \cite{nocedal1999numerical}), and secondly, we can interpret the output of Algorithm \ref{alg:iterated-INLA-1} as the marginals of an {\em approximate} Laplace approximation to $p(\vec{u} | \vec{y}, \vec{\theta})$ that is close to the true Laplace approximation when the model is weakly nonlinear (see Appendix \ref{app:interpretations}).

\subsubsection{Unknown parameters}
In the case where the parameters $\vec{\theta}$ are unknown, we adopt the INLA methodology to jointly infer the state and parameters of the system as follows: Once obtaining the approximate Gaussian posterior $\tilde{p}_G^{(n)}(\vec{u} | \vec{y}, \vec{\theta})$, we compute an approximation of the marginal posterior $p(\vec{\theta} | \vec{y})$ by
\begin{align}
    \tilde{p}^{(n)}(\vec{\theta} | \vec{y}) \propto \left.\frac{p(\vec{u}, \vec{y}, \vec{\theta})}{\tilde{p}_G^{(n)}(\vec{u} | \vec{y}, \vec{\theta})}\right|_{\vec{u} = \vec{\mu}^{(n)}_{\vec{u}|\vec{y}}(\vec{\theta})}. \label{eq:posterior-parameter-estimation}
\end{align}
The marginal posteriors $\{p(u_i | \vec{y})\}_{i=1}^{d_u}$ can then be approximated by numerical integration
\begin{align}
    \tilde{p}^{(n)}(u_i | \vec{y}) = \sum_k \tilde{p}_G^{(n)}(u_i | \vec{y}, \vec{\theta}_k^{(n)}) \tilde{p}^{(n)}(\vec{\theta}_k^{(n)} | \vec{y}) \Delta_k,\label{eq:posterior-state-estimation}
\end{align}
where the selection of the quadrature nodes $\vec{\theta}_k^{(n)}$ and volume elements $\Delta_k$ follow in the same way as vanilla INLA. Note that neither the approximate parameter estimate \eqref{eq:posterior-parameter-estimation} nor the state estimate \eqref{eq:posterior-state-estimation} are Gaussians (however, the latter is a mixture of Gaussians).

Looking at \eqref{eq:posterior-state-estimation}, it is natural to consider the following update rule to obtain the next linearisation point
\begin{align}
    &\bar{\vec{u}}^{(n)} := \sum_k \vec{\mu}^{(n)}_{\vec{u}|\vec{y}}(\vec{\theta}_k^{(n)})\tilde{p}^{(n)}(\vec{\theta}_k^{(n)} | \vec{y}) \Delta_k \label{eq:u-bar} \\
    &\vec{u}^{(n+1)}_0 = (1 - \gamma)\vec{u}^{(n)}_0 + \gamma \bar{\vec{u}}^{(n)},
\end{align}
for some $\gamma \in (0, 1]$ and $\vec{\mu}^{(n)}_{\vec{u}|\vec{y}}(\vec{\theta})$ is defined in \eqref{eq:approx-posterior-mean}. Let us call this the type-I update rule. We also consider another approach, where the parameter-averaging in \eqref{eq:u-bar} instead takes place on the natural parameters of $\tilde{p}_G^{(n)}(\vec{u} | \vec{y}, \vec{\theta})$, i.e.,
\begin{align}
    &\bar{\mat{P}}^{(n)} := \sum_k \mat{P}^{(n)}_{\vec{u}|\vec{y}}(\vec{\theta}_k^{(n)})\tilde{p}^{(n)}(\vec{\theta}_k^{(n)} | \vec{y}) \Delta_k, \label{eq:Q-bar} \\
    &\bar{\vec{b}}^{(n)} := \sum_k \mat{P}^{(n)}_{\vec{u}|\vec{y}}(\vec{\theta}_k^{(n)}) \vec{\mu}^{(n)}_{\vec{u}|\vec{y}}(\vec{\theta}_k^{(n)})\tilde{p}^{(n)}(\vec{\theta}_k^{(n)} | \vec{y}) \Delta_k, \label{eq:b-bar} \\
    &\bar{\vec{u}}^{(n)} := (\bar{\mat{P}}^{(n)})^{-1} \bar{\vec{b}}^{(n)},\label{eq:u-bar-2} \\
    &\vec{u}^{(n+1)}_0 = (1 - \gamma)\vec{u}^{(n)}_0 + \gamma \bar{\vec{u}}^{(n)}. \label{eq:damped-update-2}
\end{align}
We call this the type-II update rule. Using the type-II updates, we obtain a result analogous to Proposition \ref{eq:connection-to-4dvar} in the unknown parameter setting, where instead, the sequence $\{\vec{u}_0^{(n)}\}$ progressively minimises a ``parameter-averaged'' 4D-Var cost. We state this below.

\begin{proposition}\label{eq:connection-to-averaged-4dvar}
    Updating the linearisation point $\vec{u}^{(n)}_0$ according to  \eqref{eq:Q-bar}--\eqref{eq:damped-update-2} is an approximate Gauss--Newton method for minimising the parameter-averaged 4D-Var cost $\mathbb{E}_{p(\vec{\theta} | \vec{y})}[-\log p(\vec{u} | \vec{y}, \vec{\theta})]$.
\end{proposition}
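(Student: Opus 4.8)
The plan is to lift the argument behind Proposition~\ref{eq:connection-to-4dvar} from one fixed $\vec\theta$ to a $\vec\theta$-average. First I would note that, as a function of $\vec u$ alone, the parameter-averaged cost and $F[\vec u] := \mathbb{E}_{p(\vec\theta\mid\vec y)}\!\big[J[\vec u;\vec\theta]\big]$ differ only by a constant: since $-\log p(\vec u\mid\vec y,\vec\theta) = J[\vec u;\vec\theta] + \log p(\vec y\mid\vec\theta) + \kappa(\vec\theta)$ with $\kappa(\vec\theta)$ (a log-determinant term) and $\log p(\vec y\mid\vec\theta)$ both independent of $\vec u$, minimising $\mathbb{E}_{p(\vec\theta\mid\vec y)}[-\log p(\vec u\mid\vec y,\vec\theta)]$ over $\vec u$ is the same as minimising $F$. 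So it suffices to exhibit \eqref{eq:Q-bar}--\eqref{eq:damped-update-2} as a damped, approximate Gauss--Newton iteration for $F$.

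Next I would recall from the proof of Proposition~\ref{eq:connection-to-4dvar} that, for each fixed $\vec\theta$, the Gauss--Newton model of $J[\cdot\,;\vec\theta]$ at the current iterate $\vec u^{(n)}_0$ --- obtained by linearising the single nonlinear residual $\vec u \mapsto \vec u_n - \vec f_{\vec\theta}(\vec u_{n-1})$, i.e.\ replacing $\vec f_{\vec\theta}$ by its linearisation around $\vec u^{(n)}_0$ --- is the quadratic $J^{(n)}_{\mathrm{GN}}[\vec u;\vec\theta] = \tfrac12\big(\vec u - \vec\mu^{(n)}_{\vec u\mid\vec y}(\vec\theta)\big)^{\!\top}\mat P^{(n)}_{\vec u\mid\vec y}(\vec\theta)\big(\vec u - \vec\mu^{(n)}_{\vec u\mid\vec y}(\vec\theta)\big) + c^{(n)}(\vec\theta)$, whose Hessian equals the posterior precision $\mat P^{(n)}_{\vec u\mid\vec y}(\vec\theta)$ of \eqref{eq:precision-u|y} and whose unique minimiser equals the posterior mean $\vec\mu^{(n)}_{\vec u\mid\vec y}(\vec\theta)$ of \eqref{eq:mean-u|y}, with $c^{(n)}(\vec\theta)$ independent of $\vec u$. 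The structural point I would then invoke is that Gauss--Newton acts by first-order Taylor-expanding the residual \emph{maps} (in which $\vec\theta$ appears only as a parameter), and that $F$ is a weighted average over $\vec\theta$ of the per-$\vec\theta$ least-squares costs; hence the Gauss--Newton model of $F$ at $\vec u^{(n)}_0$ is the $\vec\theta$-average of the per-$\vec\theta$ models, $F^{(n)}_{\mathrm{GN}}[\vec u] = \mathbb{E}_{p(\vec\theta\mid\vec y)}\!\big[J^{(n)}_{\mathrm{GN}}[\vec u;\vec\theta]\big]$.

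Then I would compute the Gauss--Newton step by minimising $F^{(n)}_{\mathrm{GN}}$ in closed form: differentiating and discarding the $\vec u$-independent $c^{(n)}(\vec\theta)$, the stationarity condition is $\mathbb{E}_{p(\vec\theta\mid\vec y)}\!\big[\mat P^{(n)}_{\vec u\mid\vec y}(\vec\theta)\big(\vec u - \vec\mu^{(n)}_{\vec u\mid\vec y}(\vec\theta)\big)\big] = \vec 0$, with solution $\vec u = \big(\mathbb{E}_{p(\vec\theta\mid\vec y)}[\mat P^{(n)}_{\vec u\mid\vec y}(\vec\theta)]\big)^{-1}\mathbb{E}_{p(\vec\theta\mid\vec y)}[\mat P^{(n)}_{\vec u\mid\vec y}(\vec\theta)\vec\mu^{(n)}_{\vec u\mid\vec y}(\vec\theta)]$. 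Replacing the intractable $\vec\theta$-expectations by the INLA quadrature $\mathbb{E}_{p(\vec\theta\mid\vec y)}[g(\vec\theta)] \approx \sum_k g(\vec\theta^{(n)}_k)\,\tilde p^{(n)}(\vec\theta^{(n)}_k\mid\vec y)\,\Delta_k$ identifies $\mathbb{E}[\mat P^{(n)}_{\vec u\mid\vec y}]$ with $\bar{\mat P}^{(n)}$ of \eqref{eq:Q-bar} and $\mathbb{E}[\mat P^{(n)}_{\vec u\mid\vec y}\vec\mu^{(n)}_{\vec u\mid\vec y}]$ with $\bar{\vec b}^{(n)}$ of \eqref{eq:b-bar}, so the approximate Gauss--Newton step is exactly $\bar{\vec u}^{(n)} = (\bar{\mat P}^{(n)})^{-1}\bar{\vec b}^{(n)}$ as in \eqref{eq:u-bar-2}, and \eqref{eq:damped-update-2} is this step taken with length $\gamma\in(0,1]$, i.e.\ a damped Gauss--Newton update. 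The word ``approximate'' in the statement is thereby pinned to exactly two relaxations --- the quadrature sum in place of the $\vec\theta$-integral, and the INLA surrogate $\tilde p^{(n)}(\cdot\mid\vec y)$ in place of $p(\vec\theta\mid\vec y)$ --- the per-$\vec\theta$ linearisation being the exact Gauss--Newton model as in Proposition~\ref{eq:connection-to-4dvar}.

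The only step that is not purely formal is the commutation claim used above: that the Gauss--Newton model of a $\vec\theta$-averaged nonlinear-least-squares objective is the $\vec\theta$-average of the per-$\vec\theta$ Gauss--Newton models. I would make this rigorous by writing $J[\cdot\,;\vec\theta] = \tfrac12\|\vec r(\vec u;\vec\theta)\|^2$ for the stacked residual with blocks $\mat R^{-1/2}(\vec y_n - \mat H_n\vec u_n)$, $\mat Q^{-1/2}(\vec u_n - \vec f_{\vec\theta}(\vec u_{n-1}))$ and $\mat C^{-1/2}(\vec u_0 - \vec u_b)$, noting that Gauss--Newton expands each block to first order at $\vec u^{(n)}_0$ independently of the others and of any outer averaging, and that $\mathbb{E}_{\vec\theta}$ is linear; the remainder is the quadratic algebra above together with bookkeeping of $\vec u$-independent constants. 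A minor additional check is invertibility of $\bar{\mat P}^{(n)}$, which follows since each $\mat P^{(n)}_{\vec u\mid\vec y}(\vec\theta^{(n)}_k)$ is positive definite and the quadrature weights are nonnegative.
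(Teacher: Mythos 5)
Your proposal is correct and follows essentially the same route as the paper's proof: both identify the per-$\vec\theta$ Gauss--Newton Hessian and step with the posterior precision $\mat P^{(n)}_{\vec u\mid\vec y}(\vec\theta)$ and mean $\vec\mu^{(n)}_{\vec u\mid\vec y}(\vec\theta)$ from Proposition \ref{eq:connection-to-4dvar}, average over $\vec\theta$ by linearity, and attribute the ``approximate'' qualifier to replacing $p(\vec\theta\mid\vec y)$ by $\tilde p^{(n)}(\vec\theta\mid\vec y)$ and the integral by the quadrature sum. Your presentation via the minimiser of the averaged quadratic model is an equivalent reformulation of the paper's damped preconditioned-gradient iteration, and your explicit remarks on the $\vec u$-independent constants and the invertibility of $\bar{\mat P}^{(n)}$ are minor points of added care rather than a different argument.
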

\begin{proof}
    Appendix \ref{app:connection-to-averaged-4dvar}.
\end{proof}

By Jensen's inequality, we have that
\begin{align}
    -\log p(\vec{u}|\vec{y}) \leq \mathbb{E}_{p(\vec{\theta} | \vec{y})}[-\log p(\vec{u} | \vec{y}, \vec{\theta})].
\end{align}
Thus, the minima of the parameter-averaged 4DVar cost can be seen as approximating the mode of $p(\vec{u}|\vec{y})$; the converged value $\vec{u}_0^{(\infty)}$ using the type-II update is therefore interpreted as an approximate MAP estimator for $\vec{u}$ given $\vec{y}$.

Using either update rules, we obtain uncertainty estimates on the predictions using the Gaussian mixture \eqref{eq:posterior-state-estimation} at $n=\infty$, where the variances of the Gaussians $\tilde{p}_G^{(\infty)}(u_i | \vec{y}, \vec{\theta}_k^{(\infty)})$ are computed using the Takahashi recursion.
It is unnecessary to compute \eqref{eq:posterior-state-estimation} at every iteration but only at the end.
We summarise the full process in Algorithm \ref{alg:iterated-INLA-2}.

\begin{algorithm}[tb]
   \caption{Iterated INLA with unknown parameters}
   \label{alg:iterated-INLA-2}
\begin{algorithmic}[1]
   \STATE {\bfseries Input:} observations $\vec{y}$, damping coefficient $\gamma$
   \STATE {\bfseries Initialise:} $u^{(0)}_0$, $n = 0$
   \WHILE{$\vec{u}^{(n)}_0$ has not converged}
   \STATE $\mathcal{L}^{(n)}_0 \leftarrow $ Linearise operator $\mathcal{L}$ around $u^{(n)}_0$
   \STATE $r^{(n)}_0 \leftarrow \mathcal{L}^{(n)}_0 u^{(n)}_0 - \mathcal{L}[u^{(n)}_0]$
   \STATE $\mat{L}^{(n)}, \vec{r}^{(n)} \leftarrow $ Discretise $\mathcal{L}^{(n)}_0$ and $r^{(n)}_0$
   \STATE $\tilde{p}_G^{(n)}(\vec{u}|\vec{\theta}) \leftarrow \mathcal{N}(\vec{u}| (\mat{L}^{(n)})^{-1}\vec{r}^{(n)}, \,(\mat{L}^{(n)\top}\bar{\mat{Q}}^{-1}\mat{L}^{(n)})^{-1})$
   \STATE $\tilde{p}_G^{(n)}(\vec{u} | \vec{y}, \vec{\theta}) \leftarrow$ Equations \eqref{eq:precision-u|y}--\eqref{eq:mean-u|y}
   \STATE $\tilde{p}^{(n)}(\vec{\theta}|\vec{y}) \leftarrow$ Equation \eqref{eq:posterior-parameter-estimation}
   \STATE Obtain quadrature nodes $\{\vec{\theta}_k^{(n)}\}_k$ satisfying \eqref{eq:acceptance-criteria}
   % \STATE $\tilde{p}^{(n)}(\vec{u} | \vec{y}) \leftarrow$ Equation \eqref{eq:posterior-state-estimation} with nodes $\{\vec{\theta}_k^{(n)}\}_k$
   \STATE $\bar{\vec{u}} \leftarrow$ Equation \eqref{eq:u-bar} for type-I or \eqref{eq:Q-bar}--\eqref{eq:u-bar-2} for type-II
   \STATE $\vec{u}^{(n+1)}_0 \leftarrow (1 - \gamma)\vec{u}^{(n)}_0 + \gamma \bar{\vec{u}}$
   \STATE $n \leftarrow n+1$
   \ENDWHILE
   \STATE {Compute state marginals at convergence:}
   \FOR{$i = 1, \ldots, d_u$}
   \STATE $\tilde{p}_G^{(\infty)}(u_i | \vec{y}, \vec{\theta}_k^{(\infty)}) \leftarrow$ Takahashi recursion
   \STATE $\tilde{p}^{(\infty)}(u_i|\vec{y}) \leftarrow \sum_k \tilde{p}_G^{(\infty)}(u_i | \vec{y}, \vec{\theta}_k^{(\infty)}) \tilde{p}^{(\infty)}(\vec{\theta}_k^{(\infty)} | \vec{y}) \Delta_k$
   \ENDFOR
   \RETURN $\left\{\tilde{p}^{(\infty)}(u_i|\vec{y})\right\}_{i=1}^{d_u}, \left\{\tilde{p}^{(\infty)}(\vec{\theta}|\vec{y})\right\}_{j=1}^{|\vec{\theta}|}$
\end{algorithmic}
\end{algorithm}

\begin{figure}
    \begin{subfigure}[t]{0.245\textwidth}
        \hspace{-1.5mm}\includegraphics[width=\textwidth]{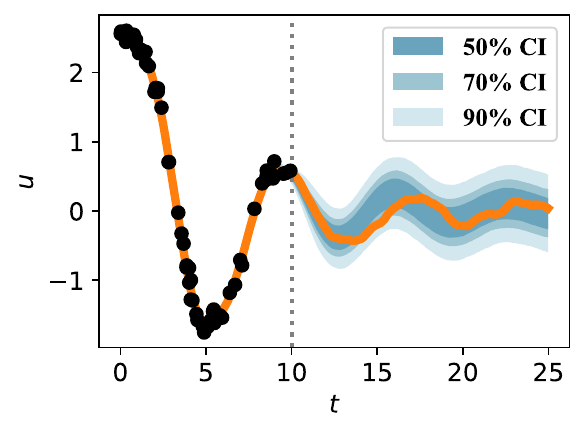}
        \vspace{-2mm}
        \caption{SMC}
    \end{subfigure}~%
    \begin{subfigure}[t]{0.245\textwidth}
        \hspace{-1.5mm}\includegraphics[width=\textwidth]{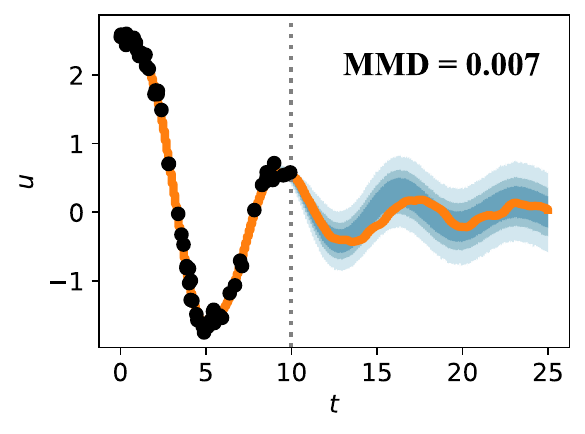}
        \vspace{-2mm}
        \caption{Iterated INLA}
    \end{subfigure}\vspace{2mm}\\
    \begin{subfigure}[t]{0.245\textwidth}
        \hspace{-1.5mm}\includegraphics[width=\textwidth]{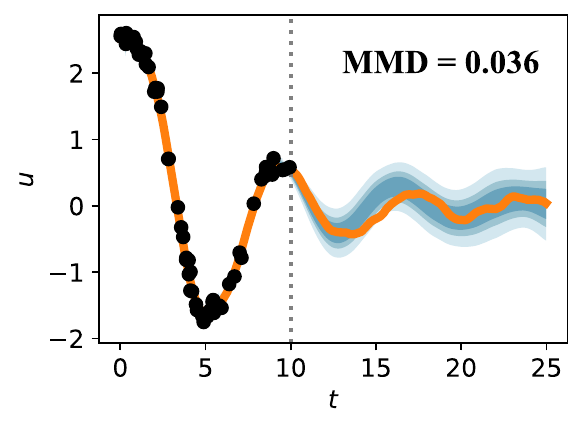}
        \vspace{-2mm}
        \caption{EnKS}
    \end{subfigure}~%
    \begin{subfigure}[t]{0.245\textwidth}
        \hspace{-1.5mm}\includegraphics[width=\textwidth]{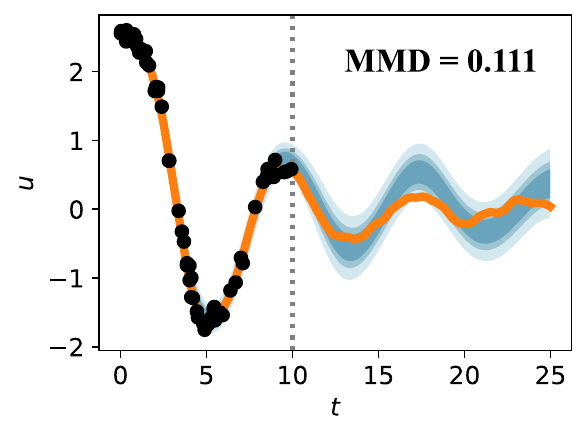}
        \vspace{-2mm}
        \caption{AutoIP}
    \end{subfigure}
    \caption{Comparison of the marginal state estimates $p(u_i | \vec{y})$ on the pendulum experiment. We display the credible intervals (CI) in blue shades; black dots are noisy observations from a sample simulation, displayed in orange. For methods (b)--(d), we display the maximum mean discrepancy (MMD) from the SMC result (a), which we take as the gold standard. Iterated INLA performs best both qualitatively and in terms of the MMD score.}
    \label{fig:p_u_y_comparison}
\end{figure}

\begin{remark}
    The computational cost of iterated INLA is $\mathcal{O}(N_i I)$, where $N_i$ is the number of iterations and $I$ is the complexity of one interation of INLA (see Section \ref{app:sparse-linalg} for more details). There are no significant differences in the costs between type I and II updates. In general, this is cheaper than running particle MC, which requires a large number of particles to accurately estimate the state and parameter posteriors. However, it is more costly than running EnKS, which only scales linearly in the number of time steps and cubically in the ensemble size -- the latter is typically chosen to be small.
\end{remark}

\section{Experiments}
In this section, we evaluate the ability of iterated INLA to infer the state and parameters on several benchmark nonlinear dynamical systems. In the first part, we consider inference on a low dimensional nonlinear SDE, where the goal is to compare against a ``gold standard" SMC method. In the second part, we benchmark on several spatio-temporal nonlinear PDE systems to test the robustness of our method in the noise-free setting and compare the results against different baselines. Details can be found in Appendix \ref{app:experiment-details}.

\subsection{Stochastic nonlinear pendulum}\label{sec:pendulum}

The goal of this experiment is to evaluate the accuracy of iterated INLA for inferring the state and parameters on a low dimensional system. We compare the results against a sequential Monte Carlo (SMC) baseline, which recovers the distributions $p(\vec{\theta}|\vec{y})$ and $p(\vec{u}|\vec{y})$ accurately as we are in a low dimensional setting. We therefore use these as ``ground truths'' that one can compare against. For the dynamics model, we consider the stochastic pendulum system
\begin{align} \label{eq:stoch-pendulum}
    \frac{\mathrm{d}^2 u}{\mathrm{d}t^2} + b \frac{\mathrm{d} u}{\mathrm{d}t} + c \sin u = \sigma_u \dot{W}_t,
\end{align}
with unknown parameters $b, c$ and $\sigma_u$. Our aim is to infer these alongside the state $u$ from noisy observations $\vec{y}$ of a sample trajectory of \eqref{eq:stoch-pendulum}. The observation noise amplitude $\sigma_y$ is also taken to be unknown and is to be inferred too. The precise details on the experimental set up can be found in Appendix \ref{app:pend-experiment}.

As baselines, we considered vanilla RBF-GP regression (GPR), the ensemble Kalman smoother (EnKS) and AutoIP \citep{long2022autoip}.
For EnKS, we use the state-augmentation approach \citep{evensen2009ensemble} to jointly infer the state and model parameters $(b, c, \sigma_u)$. We also consider an iterative extension of EnKS (iEnKS) proposed in \cite{bocquet2013joint}, which can be used for joint state and paramter estimation. However, these methods do not accommodate learning of the observation noise $\sigma_y$, so we fix this to the ground truth value in the EnKS / iEnKS experiments. AutoIP is capable of learning all four parameters, however it can only learn point estimates by gradient descent. Therefore, we initialise them with fixed values, set to the mode of the respective priors. For GPR, we only learn the hyperparameters of the RBF kernel by type-II maximum likelihood estimation \citep{williams2006gaussian}.

\begin{table}[ht]
    \centering
    \resizebox{0.48\textwidth}{!}{%
        \begin{tabular}{lccc}
        \toprule
        & RMSE & MNLL & MMD \\
        \midrule
        GPR & $0.26 \pm 0.03$ & $-0.08 \pm 0.03$ & $0.59 \pm 0.17$ \\
        EnKS & $0.18 \pm 0.01$ & $-0.50 \pm 0.08$ & $0.29 \pm 0.10$ \\
        iEnKS & $0.21 \pm 0.02$ & $1.02 \pm 0.74$ & $0.74 \pm 0.21$ \\
        AutoIP & $\mathbf{0.14 \pm 0.02}$ & $-0.11 \pm 0.24$ & $0.58 \pm 0.16$ \\
        \midrule
        iINLA-I & $0.23 \pm 0.06$ & $-0.52 \pm 0.12$ & $0.29 \pm 0.16$ \\
        iINLA-II & $0.18 \pm 0.01$ & $\mathbf{-0.67 \pm 0.06}$ & $\mathbf{0.17 \pm 0.06}$ \\
        \bottomrule
        \end{tabular}%
    }
    \caption{State prediction accuracy (RMSE+MNLL) and MMD from the SMC baseline on the pendulum experiment. We display the mean and standard errors across ten seeds.}
    \label{tab:pendulum-results}
\end{table}

We display the results across ten random simulations of \eqref{eq:stoch-pendulum} in Table \ref{tab:pendulum-results}. We compare the root mean square error (RMSE) and the mean negative log-likelihood (MNLL) of the estimated marginal state posteriors $\tilde{p}(u_i | \vec{y})$. The RMSE was computed using the appropriate estimators for $\vec{u}$ for each model---for GPR, EnKS and AutoIP, we took the predictive means; for iterated INLA, we took the converged linearisation points $\vec{u}_0^{(\infty)}$. In addition, we compared the maximum mean discrepancy (MMD) \citep{gretton2012kernel} of the estimates $\tilde{p}(u_i | \vec{y})$ from $p(u_i | \vec{y})$, computed using SMC. The MMD measures how close two distributions are based on samples from the respective distributions. We also compare both update rules for iterated INLA, which we abbreviate as iINLA-I and II respectively. Table \ref{tab:pendulum-results} shows that, while AutoIP shows the best performance on the RMSE, it performs poorly on the MNLL, likely due to overconfident predictions. On the other hand, both iINLA methods outperform the other models on the MNLL, suggesting a good calibration of the uncertainties. Using the type-II update, iINLA is also shown to have the closest results to SMC, as indicated by the low MMD score. Interestingly, the results obtained by the type-II update outperforms those obtained by type-I across all metrics. While it is difficult to understand exactly why this occurs, it is possible that the fact that linearisation occurs around the MAP estimate of $p(\vec{u}|\vec{y})$ using type-II updates (Proposition \ref{eq:connection-to-averaged-4dvar}) helps to improve the performance.

In Figure \ref{fig:p_u_y_comparison}, we plot the state uncertainties produced by SMC, AutoIP, EnKS and iINLA-II on a single random seed. Here, we can see that the uncertainties generated by iINLA-II is nearly identical to the SMC output. This is also reflected in the lower MMD score.
We also display the estimates of the parameters in Figure \ref{fig:p_theta_y_comparison}, where we plot a heatmap of the estimated distribution on the parameters $b$ and $\sigma_u$, computed using (a) iINLA-II, and (b) EnKS. As a reference, we also display the marginals on the parameters $b, \sigma_u$ computed using SMC in blue. We see that both methods achieve similar results to SMC for estimating $b$. However, for $\sigma_u$, we see that while the estimates from iINLA-II agree closely with the results from SMC, the estimate from EnKS is significantly different. This behaviour is consistent with previous observations that ensemble methods struggle to learn parameters associated with stochastic terms in the equation \citep{delsole2010state}. Iterated INLA in contrast can get accurate estimates on the stochastic parameters. As a reference, to produce the SMC results in Figure \ref{fig:p_theta_y_comparison} took $\approx 25$ minutes on an M1 Macbook Pro, whereas it took $\approx 1$ minute to produce analogous results using iINLA (Table \ref{tab:time-comparison}).

\begin{figure}[ht]
\centering
\begin{subfigure}[t]{0.25\textwidth}
    \centering
    \hspace{-2mm}\includegraphics[width=\textwidth]{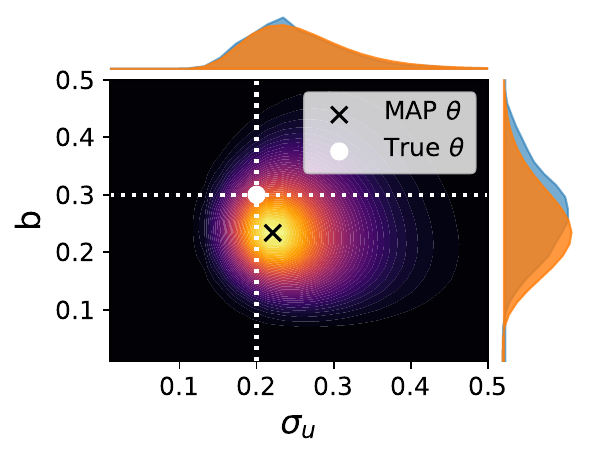}\vspace{-1mm}
    \caption{Iterated INLA}
\end{subfigure}~%
\begin{subfigure}[t]{0.25\textwidth}
    \centering
    \hspace{-5mm}\includegraphics[width=\textwidth]{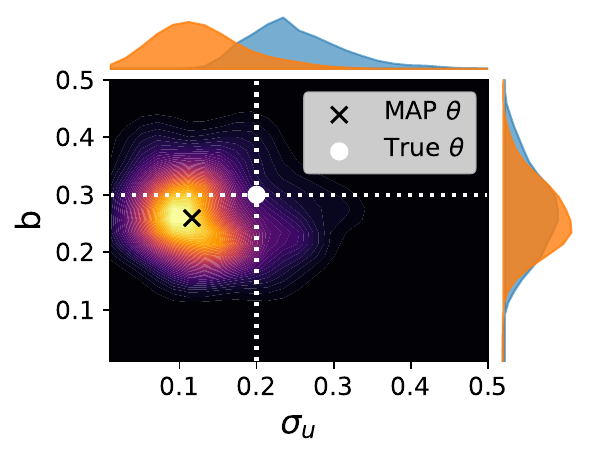}\vspace{-1mm}
    \caption{EnKS}
\end{subfigure}
\caption{Estimated marginal posterior densities for the $b$ parameter and the system noise parameter $\sigma_u$ using (a) iINLA-II and (b) EnKS for the pendulum experiment. The marginal distributions are displayed in orange on the respective axes. We also plot the marginal distributions obtained by SMC in blue. For the $\sigma_u$ parameter, the estimates obtained by EnKS diverges from SMC, while iterated INLA recovers it correctly.}
\label{fig:p_theta_y_comparison}
\end{figure}

\begin{table}[ht]
    \centering
    \begin{tabular}{lc}
    \toprule
    Method & Run time (s) \\
    \midrule
    SMC (10,000 samples) & $1541 \pm 38$ \\
    iINLA-II (25 iterations) & $67.26 \pm 0.94$\\
    EnKS (100 ensembles) & $3.07 \pm 0.31$ \\
    \bottomrule
    \end{tabular}
    \caption{Comparison of run times between SMC, iterated INLA (type II) and EnKS to produce the parameter estimates in Figure \ref{fig:p_theta_y_comparison}. We display the mean and standard deviation of run times across five different runs on an M1 Macbook Pro.}
    \label{tab:time-comparison}
\end{table}

\subsection{PDE benchmarks}\label{sec:pde-benchmark-experiments}
In this experiment, we evaluate the performance of iINLA on several benchmark spatio-temporal PDE datasets, including the Burgers' equation, Allen-Cahn (AC) equation and the Korteweg-de Vries (KdV) equation. Details of these systems and its linearisations can be found in Appendices \ref{app:allen-cahn}--\ref{app:kdv}.
For each PDE, we generated a deterministic trajectory representing the ground truth. Then we randomly sampled noisy observations from the generated fields, which we used as a training set to recover the original field and the parameters used to generate them. We selected one parameter to learn per model. However for iINLA, we additionally need to train the process noise parameter $\sigma_u > 0$, whose real value is zero. It is therefore also of interest to see how iINLA performs under this mismatched model scenario.

\begin{table*}[ht]
    \centering
    \resizebox{0.98\textwidth}{!}{%
    \begin{tabular}{l cc cc cc}
    \toprule
    & \multicolumn{2}{c}{Burgers'} & \multicolumn{2}{c}{Allen-Cahn} & \multicolumn{2}{c}{Korteweg-de Vries} \\
    \cmidrule(lr){2-3}
    \cmidrule(lr){4-5}
    \cmidrule(lr){6-7}
    & RMSE & MNLL & RMSE & MNLL & RMSE & MNLL \\
    \midrule
    GPR & $0.119 \pm 0.004$ & $-0.959 \pm 0.028$ & $0.468 \pm 0.003$ & $-0.283 \pm 0.021$ & $0.461 \pm 0.008$ & $0.521 \pm 0.018$ \\
    EnKS & $0.008 \pm 0.001$ & $-3.67 \pm 0.11$ & $\mathbf{0.028 \pm 0.001}$ & $\mathbf{-4.08 \pm 0.076}$ & $0.228 \pm 0.029$ & $-0.010 \pm 0.263$ \\
    iEnKS & $\mathbf{0.006 \pm 0.001}$ & $\mathbf{-3.97 \pm 0.05}$ & $0.062 \pm 0.002$ & $11.67 \pm 1.18$ & $0.131 \pm 0.021$ & $2807 \pm 650$ \\
    AutoIP & $0.018 \pm 0.003$ &  $17.2 \pm 10.2$ & $0.389 \pm 0.008$ & $16.2 \pm 4.2$ & $0.270 \pm 0.007$ & $0.677 \pm 0.067$ \\
    \midrule
    iINLA-I & $0.009 \pm 0.001$ & $-3.49 \pm 0.39$ & $0.053 \pm 0.003$ & $-2.30 \pm 0.60$ & $\mathbf{0.010 \pm 0.000}$ & $\mathbf{-3.28 \pm 0.03}$ \\
    iINLA-II & $0.009 \pm 0.001$ & $-3.49 \pm 0.34$ & $0.053 \pm 0.004$ & $-2.95 \pm 0.14$ & $\mathbf{0.010 \pm 0.000}$ & $\mathbf{-3.28 \pm 0.04}$ \\
    \bottomrule
    \end{tabular}%
    }
    \caption{Performance of iINLA and baseline models on three PDE benchmarks. We display the mean and the standard error of the RMSE and MNLL across five different seeds for each system, where the randomness is due to observation sampling.}
    \label{tab:pde-benchmark-results}
\end{table*}

We compared the performance of iINLA against the same baselines of GPR, EnKS, iEnKS and AutoIP. Their results are summarised in Table \ref{tab:pde-benchmark-results}. Generally, we find that iINLA and EnKS perform better than the other models on both metrics (with the exception of the Burgers' experiment, where iEnKS performs marginally better).
AutoIP tends to produce over-smoothed results and fails to learn the correct parameter, leading to an MNLL that is even worse than GPR's. The differences between type I and II updates in iINLA were negligible here.

For the Burgers' experiment, the performance of EnKS, iEnKS and iINLA are similar, with the iEnKS slightly ourperforming the others on both the RMSE and the MNLL. However, we encountered numerical stability issues with the EnKS and iEnKS using a fourth-order Runge--Kutta scheme with a timestep of $\Delta t = 0.02$ when jointly learning the state and parameters (we did not encounter this issue when learning just the state). Hence, this required us to use a more sophisticated solver in \cite{kassam2005fourth} with an order of magnitude smaller timestep of $\Delta t = 10^{-3}$ to run the simulations reliably. Iterated INLA did not have this issue and ran reliably at the original timestep, using a basic central difference scheme for discretisation. On the Allen-Cahn example, we see that EnKS outperforms iINLA on both metrics (iEnKS performed significantly worse on this example). To understand this, it helps to see that the uncertainties generated by iINLA is generally higher than those generated by EnKF (Figure \ref{fig:ac-std-comparison}). This is due to the existence of the small but positive process noise $\sigma_u$ that cannot be removed from iINLA. Upon training, this converged to  $10^{-3}$. Hence, even in regions where predictions can be more confident, the uncertainty cannot go  below this value, leading to slighly smoother and underconfident predictions (note that the uncertainty in EnKS goes down to $10^{-5}$). In the KdV example, we instead see that iINLA performs better than EnKS. Again, we encountered numerical stability issues with EnKS on the KdV example and is likely the cause for the poor performance of EnKS. On the other hand, we found that iINLA is numerically robust and converges consistently, without the need for grid upsampling. We plot the outputs of all methods for each PDE in Appendix \ref{eq:result-visualisations}.

\begin{figure}
    \begin{subfigure}[t]{0.2086451613\textwidth}
        \includegraphics[width=\textwidth]{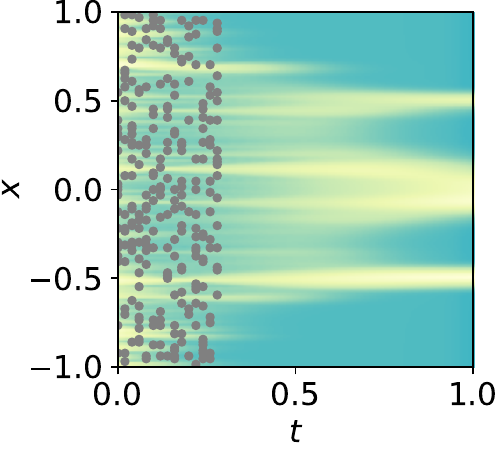}
        \caption{Iterated INLA}
    \end{subfigure}\hfill
    \begin{subfigure}[t]{0.2813548387\textwidth}
        \includegraphics[width=\textwidth]{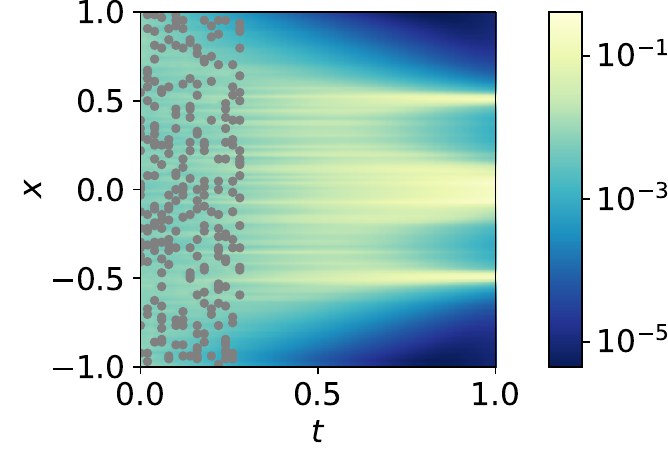}
        \caption{EnKS}
    \end{subfigure}
    \caption{Comparison of the predicted standard deviations on the Allen-Cahn example. The predictions are generally underconfident for iINLA due to the presence of $\sigma_u > 0$. Gray dots are observation locations.}
    \label{fig:ac-std-comparison}
\end{figure}

\begin{table}[ht]
    \centering
    \resizebox{0.49\textwidth}{!}{%
        \begin{tabular}{lccc}
        \toprule
        Parameters & $\nu$ & $C$ & $\lambda_1$ \\
        \midrule
        True values & $0.02$ & $5.0$ & $1.0$ \\
        Prior modes & $0.05$ & $3.0$ & $0.5$\\
        \midrule
        Estimates & $0.023 \pm 0.001$ & $5.07 \pm 0.07$ & $0.996 \pm 0.004$ \\
        \bottomrule
        \end{tabular}%
    }
    \caption{Estimated parameter values using iINLA. We display the mean and standard error of the estimated values (i.e. posterior modes) across five different seeds.}
    \label{tab:param-estimation}
\end{table}

In terms of parameter estimation, we find that iINLA recovers the correct values for all three PDEs reliably, as shown in Table \ref{tab:param-estimation}. This is despite our initial guesses (the prior modes) being reasonably far from the true values. Here, $\nu$, $C$ and $\lambda_1$ refer to the trainable parameters in the Burgers', Allen-Cahn and KdV models respectively.

\section{Discussion and Conclusion}
In this paper, we proposed an algorithm based on the INLA methodology that effectively learns the state and the parameters in nonlinear dynamical systems without resorting to expensive MCMC. This is achieved by iteratively linearising the dynamical model, where one can apply INLA to infer the state and parameters. We prove that this is approximately identical to the Gauss-Newton method for minimising the 4D-Var loss, and demonstrate experimentally that it is numerically robust; it also produces accurate non-Gaussian estimates of the latent variables. Issues remain regarding the scalability of the method: INLA is typically employed for moderately-sized problems in two or three physical dimensions. It is difficult to see this being used in very large-scale applications, such as numerical weather forecasting. We also do not consider non-Gaussian likelihoods here, although this should be a straightforward extension by adopting nested Laplace approximations. We also have not exploited the Markovian structure in the temporal component \`a la filtering/smoothing, which may help to speed up the algorithm. While the results are promising for the toy models considered here, further investigation is necessary to determine how our method fares in realistic medium-scale scenarios such as optical tomography \cite{arridge2009optical} and nuclear fusion control \citep{morishita2024first}.

\begin{code}
    The code accompanying this paper is available at \url{https://github.com/rafaelanderka/iter-inla}.
\end{code}

\begin{contributions} % will be removed in pdf for initial submission 
					  % (without ‘accepted’ option in \documentclass)
                      % so you can already fill it to test with the
                      % ‘accepted’ class option
    Conceptualisation: ST; Methodology: RA, ST;  Software: RA; Writing - original draft: ST; Writing - Review and Editing: RA, MPD; Supervision: MPD, ST. All authors approved the final submitted draft.
\end{contributions}

\begin{acknowledgements} % will be removed in pdf for initial submission,
						 % (without ‘accepted’ option in \documentclass)
                         % so you can already fill it to test with the
                         % ‘accepted’ class option
    ST is supported by a Department of Defense Vannevar Bush Faculty Fellowship held by Prof. Andrew Stuart, and by the SciAI Center, funded by the Office of Naval Research (ONR), under Grant Number N00014-23-1-2729.
\end{acknowledgements}

% References
\bibliography{main}

\newpage

\onecolumn

\renewpagestyle{plain}{%
    \setfoot{}{\thepage}{}
}
\title{Iterated INLA for State and Parameter Estimation in\\Nonlinear Dynamical Systems\\(Supplementary Material)}
\maketitle

\appendix

\section{INLA details}
In this appendix, we provide further details on the INLA algorithm used to infer the state and parameter marginal posterior estimates for prior models defined by GMRFs. 

\subsection{Sparse linear solve and matrix inversion}\label{app:sparse-linalg}
A key component of INLA is to exploit the sparsity of the GMRF precisions to accelerate posterior inference. In particular, computing the posterior mean and variance (see \eqref{eq:precision-u|y}--\eqref{eq:mean-u|y}) requires taking large matrix inversions, which, if performed naively using dense matrices, scales as $\mathcal{O}(d_u^3)$. This is too expensive for most spatial or spatio-temporal modelling purposes. Fortunately, algorithms exist to speed up these computations significantly when the matrices are sparse. For solving linear systems (e.g. \eqref{eq:mean-u|y}), the matrix to invert (i.e., the posterior precision) is symmetric and positive definite. Hence, it is appropriate to use a Cholesky solver here. A sparse Cholesky solver is available through the \texttt{scikit-sparse} library, which provide Python bindings to the CHOLMOD C library \citep{chen2008algorithm}. The latter provides fast routines for sparse Cholesky
factorisation among other things. By using the sparse Cholesky decomposition, one is able to reduce the initial $\mathcal{O}(d_u^3)$ complexity of solving the linear problem to $\mathcal{O}(d_u)$, $\mathcal{O}(d_u^{1/2})$, $\mathcal{O}(d_u^2)$ for problems in one, two and three-physical dimensions, respectively

For our purposes, we also need to recover marginal variances from the precision matrix, which in theory requires a matrix inversion -- again not feasible in our setting. To overcome this, INLA employs the so-called {\em Takahashi recursion} to recover the marginal variances from the Cholesky factors of the precision (see \cite[Section 2]{rue2007approximate} for the full algorithm).
Once the Cholesky factors are available, the typical cost for Takahashi recursion is $\mathcal{O}(d_u (\log d_u)^2)$.
While an implementation of the Takahashi recursion is available in R through the
R-INLA package \cite{lindgren2015bayesian}, no suitable Python package was available. We therefore extended the \texttt{scikit-sparse} library to include existing routines for fast Takahashi recursions implemented in C, ensuring compatibility with the pre-existing framework. We hope this contribution will be incorporated into the main branch of the library, thereby allowing
easy access to fast Takahashi recursion in Python for other researchers, and extending the
contributions of this work further.

\subsection{Computing \texorpdfstring{$\tilde{p}(\vec{\theta} | \vec{y})$}{p(θ|y)}}\label{app:computin-p-theta-y}
We recall that INLA computes the marginal 
state posteriors by numerical integration
\begin{align}\label{eq:state-marginal-posterior-app}
    p(u_i | \vec{y}) = \int p(u^i | \vec{y}, \vec{\theta}) p(\vec{\theta} | \vec{y}) \mathrm{d}\vec{\theta} \approx \sum_{k=1}^K \tilde{p}(u_i | \vec{y}, \vec{\theta}_k) \tilde{p}(\vec{\theta}_k | \vec{y}) \Delta_k,
\end{align}
where $\tilde{p}(u_i | \vec{y}, \vec{\theta})$ and $\tilde{p}(\vec{\theta} | \vec{y})$ are approximations to the distributions $p(u_i | \vec{y}, \vec{\theta})$ and $p(\vec{\theta} | \vec{y})$. When the likelihood is Gaussian, then we can compute the posterior $p(u_i | \vec{y}, \vec{\theta})$ exactly and efficiently using the techniques in Appendix \ref{app:sparse-linalg}. Hence, we don't require further approximations, i.e., we can take $\tilde{p}(u_i | \vec{y}, \vec{\theta}) = p(u_i | \vec{y}, \vec{\theta})$. For $p(\vec{\theta} | \vec{y})$, we use the approximation
\begin{align}
    \tilde{p}(\vec{\theta} | \vec{y}) \propto \left.\frac{p(\vec{u}, \vec{y}, \vec{\theta})}{p(\vec{u} | \vec{y}, \vec{\theta})}\right|_{\vec{u} = \vec{\mu}_{\vec{u}|\vec{y}}(\vec{\theta})},
\end{align}
which can be understood as a Laplace approximation of $p(\vec{\theta} | \vec{y})$ in the sense of \cite{tierney1986accurate}.
To compute this explicitly, we consider its log-transform
\begin{align}
    \log \tilde{p}(\vec{\theta} | \vec{y}) &= \left[\log p(\vec{u}, \vec{y}, \vec{\theta}) - \log p(\vec{u}|\vec{y}, \vec{\theta})\right]_{\vec{u} = \vec{\mu}_{\vec{u}|\vec{y}}(\vec{\theta})} + \text{const.} \\
    &= \left[\sum_{i=1}^{|\vec{\theta}|} \log p(\theta_i) + \log p(\vec{u} | \vec{\theta}) + \log p(\vec{y} | \vec{u}, \vec{\theta}) - \log p(\vec{u}|\vec{y}, \vec{\theta})\right]_{\vec{u} = \vec{\mu}_{\vec{u}|\vec{y}}(\vec{\theta})} + \text{const.} \\
    \begin{split}
    &=\sum_{i=1}^{|\vec{\theta}|} \log p(\theta_i) + \frac12 \log |\mat{Q}_{\vec{u}}(\vec{\theta})| - \frac12 (\vec{\mu}_{\vec{u}|\vec{y}}(\vec{\theta}) - \vec{\mu}_{\vec{u}}(\vec{\theta}))^\top \mat{Q}_{\vec{u}}(\vec{\theta})(\vec{\mu}_{\vec{u}|\vec{y}}(\vec{\theta}) - \vec{\mu}_{\vec{u}}(\vec{\theta})) - \frac{M}{2} \log 2\pi \\[1pt]
    &\quad + \frac12 \log |\mat{R}^{-1}| - \frac12 (\vec{y} - \mat{H}\vec{\mu}_{\vec{u}|\vec{y}}(\vec{\theta}))^\top \mat{R}^{-1} (\vec{y} - \mat{H}\vec{\mu}_{\vec{u}|\vec{y}}(\vec{\theta})) - \frac{N}{2} \log 2\pi \\[4pt]
    &\qquad + \frac12 \log |\mat{Q}_{\vec{u}|\vec{y}}(\vec{\theta})| - \frac{M}{2}\log 2\pi + \text{const.},
    \end{split}
\end{align}
which can be evaluated numerically (ignoring the constant, whose value we don't know). Then we take its exponential to get $\tilde{p}(\vec{\theta} | \vec{y})$ up to a constant. Regarding this constant, we can absorb it implicitly into the area element $\Delta_k$ in the expression \eqref{eq:state-marginal-posterior-app}. This is achieved by relying on the identity
\begin{align}\label{eq:property-for-normalisation}
    1 = \int^\infty_{-\infty} p(u_i | \vec{y}) \diff u_i \stackrel{\eqref{eq:state-marginal-posterior-app}}{\approx} \sum_{k=1}^K \cancel{\left(\int^\infty_{-\infty} \tilde{p}(u_i | \vec{y}, \vec{\theta}_k) \diff u_i \right)} \tilde{p}(\vec{\theta}_k | \vec{y}) \Delta_k = \sum_{k=1}^K \tilde{p}(\vec{\theta}_k | \vec{y}) \Delta_k.
\end{align}
Assuming that $\Delta_k = \Delta$ for all $k=1, \ldots, K$ and replacing $\tilde{p}(\vec{\theta}_k | \vec{y})$ by its unnormalised counterpart $f(\vec{\theta}_k | \vec{y}) := Z \tilde{p}(\vec{\theta}_k | \vec{y})$ for $Z := \int f(\vec{\theta}_k | \vec{y}) \diff \vec{\theta}$, we find 
\begin{align}
    \tilde{\Delta} := \Delta / Z = \frac{1}{\sum_{k=1}^K f(\vec{\theta}_k | \vec{y})}.
\end{align}
Thus, we have
\begin{align}
    \eqref{eq:state-marginal-posterior-app} = \sum_{k=1}^K p(u_i | \vec{y}, \vec{\theta}_k) f(\vec{\theta}_k | \vec{y}) \tilde{\Delta},
\end{align}
which does not require knowledge of the normalisation constant $Z$. Next, we discuss how to select the quadrature nodes $\{\vec{\theta}_k\}_{k=1}^K$ in the above expression.

\subsection{Selection of the quadrature nodes}\label{app:quadrature-node-selection}
In INLA, the quadrature nodes $\vec{\theta}_k$ in \eqref{eq:state-marginal-posterior-app} are selected according to the following steps.

\paragraph{Step 1.} Locate the mode $\vec{\theta}_*$ of $\tilde{p}(\vec{\theta}|\vec{y})$ by numerically optimising its log-transform $\log \tilde{p}(\vec{\theta}|\vec{y})$ as given above.
This typically requires a quasi-Newton method to circumvent computing the Hessian directly.
Here, the gradient, if unavailable, can be approximated via finite-difference methods
and second derivatives are constructed using the difference between successive gradient
vectors \citep{rue-inla}. We can also use derivative-free search, such as the Nelder-Mead method, which does not require computation of the gradient. We adopt the latter in our experiments, available in \texttt{scipy}'s \texttt{optimize} module.

\paragraph{Step 2.}
Compute the Hessian matrix $\mat{H} := \nabla^2 \left.\log \tilde{p}(\vec{\theta} | \vec{y})\right|_{\vec{\theta} = \vec{\theta}_*}$ at the mode $\vec{\theta}_*$ using finite differences (FD). Note that the inverse of this Hessian $\mat{H}^{-1}$ is exactly equal to the covariance matrix of a
Gaussian approximation of $\tilde{p}(\vec{\theta}|\vec{y})$, as $\mat{H}$ captures the curvature around its mode. We then
compute the eigendecomposition of $\mat{H}^{-1} = \mat{V} \mat{\Lambda} \mat{V}^\top$  to identify the principal axes along
which to explore $\tilde{p}(\vec{\theta}|\vec{y})$ for efficiency. This allows us to use the reparametrisation
\begin{align}
\vec{\theta}(\vec{z}) = \vec{\theta}_* + \mat{V}
 \mat{\Lambda}^{\frac12} \vec{z},
\end{align}
which ensures we correct for rotation and scale of $\tilde{p}(\vec{\theta}|\vec{y})$.

\paragraph{Step 3.}
Generate samples of log $\tilde{p}(\vec{\theta}|\vec{y})$ that cover the bulk of its probability mass, using
the above parametrisation for $\vec{\theta}$. Specificallly, the original INLA paper proposes that to
find the bulk of the mass of $\tilde{p}(\vec{\theta}|\vec{y})$, we can sample regularly spaced points $\{\vec{\theta}_k\}_{k=1}^K$ in $\vec{z}$-space, and
combinations of these points as long as they fulfill that
\begin{align}\label{eq:acceptance-criteria-appendix}
|\log \tilde{p}(\vec{\theta}_k|\vec{y}) - \log \tilde{p}(\vec{\theta}_*|\vec{y})| < \delta
\end{align}
Here, $\delta > 0$ is a threshold that can be tuned to balance accuracy and efficiency. These samples of $\log \tilde{p}(\vec{\theta}|\vec{y}) $ will be
used for numerical integration to find marginals such $\tilde{p}(u_i | \vec{y})$.

We refer the readers to the original manuscript \cite{rue-inla} for more details.

\section{Proofs and discussions of results}
In this appendix, we provide further details on the results Proposition \ref{eq:connection-to-4dvar} and Proposition \ref{eq:connection-to-averaged-4dvar} regarding the connection of iterated INLA with (weak-constraint) 4D-Var data assimilation. We provide proofs and discuss implications of the results.
For ease of presentation, we first rewrite the weak-constraint 4D-Var cost \eqref{eq:4dvar-loss} in the following form:
\begin{align}\label{eq:4dvar-cost-discretised}
    J[\vec{u}] = \frac12 (\vec{y} - \bar{\mat{H}} \vec{u})^\top \bar{\mat{R}}^{-1} (\vec{y} - \bar{\mat{H}} \vec{u}) + \frac12 \mathcal{L}[\vec{u}]^\top \bar{\mat{Q}}^{-1} \mathcal{L}[\vec{u}].
\end{align}
Here, we denoted $\vec{y} = (\vec{y}_1, \ldots, \vec{y}_{N_t})^\top$, $\vec{u} = (\vec{u}_0, \vec{u}_1, \ldots, \vec{u}_{N_t})^\top$, 
\begin{align}
\bar{\mat{R}} := \mathtt{diag}(\underbrace{\mat{R}, \ldots, \mat{R}}_{N_t \text{ times}}), \quad \bar{\mat{Q}} := \mathtt{diag}(\mat{C}, \underbrace{\mat{Q}, \ldots, \mat{Q}}_{N_t \text{ times}}),
\end{align}
\begin{align}
    \bar{\mat{H}} :=
    \begin{pmatrix}
        \vec{0}, & \mathtt{diag}(\mat{H}_1, \ldots, \mat{H}_{N_t})
    \end{pmatrix},
\end{align} 
and $\mathcal{L}[\vec{u}]$ is a vector in $\mathbb{R}^{d_u(N_t+1)}$ of the form $\mathcal{L}[\vec{u}] = (\vec{\ell}_0, \ldots, \vec{\ell}_{N_t})$, where
\begin{align}\label{eq:l-components}
    \vec{\ell}_i =
    \begin{cases}
        \vec{u}_{i} - \vec{f}_{\vec{\theta}}(\vec{u}_{i-1}), \quad \text{if} \quad i = 1, \ldots, N_t, \\
        \vec{u}_i - \vec{u}_b, \quad \text{if} \quad i = 0
    \end{cases} \in \mathbb{R}^{d_u}.
\end{align}
If there are missing observations at certain times, say $t_n$, then we just set $\vec{y}_n \equiv \vec{0}$ and $\mat{H}_n \equiv \mat{0}$.

\subsection{Proof of Proposition \ref{eq:connection-to-4dvar}}\label{app:connection-to-4dvar}

\begin{namedthm*}{Proposition \ref{eq:connection-to-4dvar}}
    The damped update of the linearisation point $\vec{u}_0^{(n)}$ in \eqref{eq:damped-update-1} is equivalent to minimising the weak-constraint 4D-Var cost \eqref{eq:4dvar-loss} using Gauss--Newton.
\end{namedthm*}

\begin{proof}
    The Gauss--Newton iteration for minimising \eqref{eq:4dvar-cost-discretised} reads
    \begin{align}\label{eq:gn-iteration}
        \vec{u}^{(n+1)}_0 &= \vec{u}^{(n)}_0 - \gamma \mat{B}^{-1} \nabla J[\vec{u}^{(n)}_0] \\
        &= \vec{u}^{(n)}_0 - \gamma \mat{B}^{-1} \left(\mat{L}^{(n) \top} \bar{\mat{Q}}^{-1} \mathcal{L}[\vec{u}^{(n)}_0] + \bar{\mat{H}}^\top \bar{\mat{R}}^{-1}(\mat{H}\vec{u}^{(n)}_0 - \vec{y})\right)
    \end{align}
    where $\gamma \in (0, 1)$ is the learning rate, $\mat{L}^{(n)} := \nabla \mathcal{L}[\vec{u}^{(n)}_0]$ and
    \begin{align}
        \mat{B} &:= \bar{\mat{H}}^\top \bar{\mat{R}}^{-1} \bar{\mat{H}} + \mat{L}^{(n) \top} \bar{\mat{Q}}^{-1} \mat{L}^{(n)}
    \end{align}
    is the preconditioner, given by the Gauss--Newton approximation to the Hessian of $J$.
    Next, denoting
    \begin{align}
        \vec{m}^{(n)} := \mat{L}^{(n)} \vec{u}^{(n)}_0 - \mathcal{L}[\vec{u}^{(n)}_0],
    \end{align}
    we manipulate the above expression for the Gauss--Newton iteration as follows
    \begin{align}
        &\eqref{eq:gn-iteration} = \vec{u}^{(n)}_0 - \gamma \mat{B}^{-1} \Big[(\mat{B} \vec{u}^{(n)}_0 - \mat{B} \vec{u}^{(n)}_0) + \mat{L}^{(n) \top} \bar{\mat{Q}}^{-1} \mathcal{L}[\vec{u}^{(n)}_0] + \bar{\mat{H}}^\top \bar{\mat{R}}^{-1}(\bar{\mat{H}}\vec{u}^{(n)}_0 - \vec{y})\Big] \\
        &= (1 - \gamma) \vec{u}^{(n)}_0 + \gamma \mat{B}^{-1} \Big[\mat{B} \vec{u}^{(n)}_0 - \mat{L}^{(n) \top} \bar{\mat{Q}}^{-1} \mathcal{L}[\vec{u}^{(n)}_0] - \bar{\mat{H}}^\top \bar{\mat{R}}^{-1}(\bar{\mat{H}}\vec{u}^{(n)}_0 - \vec{y}) \Big] \\
        &= (1 - \gamma) \vec{u}^{(n)}_0 + \gamma \mat{B}^{-1} \Big[\left(\bar{\mat{H}}^\top \bar{\mat{R}}^{-1} \bar{\mat{H}} + \mat{L}^{(n) \top} \bar{\mat{Q}}^{-1} \mat{L}^{(n)}\right) \vec{u}^{(n)}_0 - \mat{L}^{(n) \top} \bar{\mat{Q}}^{-1} \mathcal{L}[\vec{u}^{(n)}_0] - \bar{\mat{H}}^\top \bar{\mat{R}}^{-1}(\bar{\mat{H}}\vec{u}^{(n)}_0 - \vec{y}) \Big] \\
        &= (1 - \gamma) \vec{u}^{(n)}_0 + \gamma \mat{B}^{-1} \Big[\mat{L}^{(n) \top} \bar{\mat{Q}}^{-1} \left(\mat{L}^{(n)} \vec{u}^{(n)}_0 - \mathcal{L}[\vec{u}^{(n)}_0]\right) + \bar{\mat{H}}^\top \bar{\mat{R}}^{-1}\vec{y} \Big] \\
        &= (1 - \gamma) \vec{u}^{(n)}_0 + \gamma \mat{B}^{-1} \Big[\mat{L}^{(n) \top} \bar{\mat{Q}}^{-1} \vec{m}^{(n)} + \bar{\mat{H}}^\top \bar{\mat{R}}^{-1}\vec{y} \Big].
        \label{eq:damped-update-alt-expression}
    \end{align}
    We claim that
    \begin{align}
        \vec{\mu}^{(n)}_{\vec{u}|\vec{y}}(\vec{\theta}) := \mathbb{E}_{\tilde{p}_G^{(n)}(\vec{u}|\vec{y}, \vec{\theta})} [\vec{u}] = \mat{B}^{-1} \Big[\mat{L}^{(n) \top} \bar{\mat{Q}}^{-1} \vec{m}^{(n)} + \bar{\mat{H}}^\top \bar{\mat{R}}^{-1}\vec{y} \Big],\label{eq:nth-approx-posterior-mean}
    \end{align}
    which implies that \eqref{eq:damped-update-alt-expression} is indeed the expression for the damped update of the state estimate. To see this, we recall that
    \begin{align}
        \tilde{p}_G^{(n)}(\vec{u}|\vec{y}, \vec{\theta}) \propto \tilde{p}_G^{(n)}(\vec{u}|\vec{\theta}) p(\vec{y} | \vec{u}, \vec{\theta}),
    \end{align}
    where
    \begin{align}
        \tilde{p}_G^{(n)}(\vec{u}|\vec{\theta}) = \mathcal{N}(\vec{u} \,|\, (\mat{L}^{(n)})^{-1}\vec{m}^{(n)}, \, (\mat{L}^{(n)\top}\bar{\mat{Q}}\,\mat{L}^{(n)})^{-1})
    \end{align}
    is the approximate prior at the $n$-th iteration, and the likelihood reads
    \begin{align}
        p(\vec{y} | \vec{u}, \vec{\theta}) = \mathcal{N}(\vec{y} \,|\, \bar{\mat{H}}\vec{u}, \bar{\mat{R}}).
    \end{align}
    Then, a standard computation for Gaussians shows that
    \begin{align}
        \tilde{p}_G^{(n)}(\vec{u}|\vec{y}, \vec{\theta}) = \mathcal{N}(\vec{u} | \vec{\mu}^{(n)}_{\vec{u}|\vec{y}}(\vec{\theta}), \mat{P}^{(n)}_{\vec{u}|\vec{y}}(\vec{\theta})^{-1}),
    \end{align}
    where
    \begin{align}
        \mat{P}^{(n)}_{\vec{u}|\vec{y}}(\vec{\theta}) &= \bar{\mat{H}}^\top \bar{\mat{R}}^{-1} \bar{\mat{H}} + \mat{L}^{(n) \top} \bar{\mat{Q}}^{-1} \mat{L}^{(n)} = \mat{B}, \quad \text{and} \label{eq:nth-posterior-precision}\\
        \vec{\mu}^{(n)}_{\vec{u}|\vec{y}}(\vec{\theta}) &= \mat{P}^{(n)}_{\vec{u}|\vec{y}}(\vec{\theta})^{-1} \Big[\mat{L}^{(n) \top} \bar{\mat{Q}}^{-1} \vec{m}^{(n)} + \bar{\mat{H}}^\top \bar{\mat{R}}^{-1}\vec{y} \Big].\label{eq:nth-posterior-mean}
    \end{align}
    In particular, this shows that \eqref{eq:nth-approx-posterior-mean} holds, proving our claim.
\end{proof}

\subsection{Proof of Proposition \ref{eq:connection-to-averaged-4dvar}}\label{app:connection-to-averaged-4dvar}
\begin{namedthm*}{Proposition \ref{eq:connection-to-averaged-4dvar}}
    Updating the linearisation point $\vec{u}^{(n)}_0$ according to equations \eqref{eq:Q-bar}--\eqref{eq:damped-update-2} is an approximate Gauss--Newton method for minimising the parameter-averaged 4D-Var cost $\mathbb{E}_{p(\vec{\theta} | \vec{y})}[-\log p(\vec{u} | \vec{y}, \vec{\theta})]$.
\end{namedthm*}
\begin{proof}
Let $J_{\vec{\theta}}$ be the 4DVar cost \eqref{eq:4dvar-cost-discretised}, with the dependence on $\vec{\theta}$ made explicit. Then we have
\begin{align}
    \mathbb{E}_{p(\vec{\theta} | \vec{y})}\big[-\log p(\vec{u} | \vec{y}, \vec{\theta})\big] &= \mathbb{E}_{p(\vec{\theta} | \vec{y})}\big[J_{\vec{\theta}}[\vec{u}]\big] \\
    &= \frac12 \mathbb{E}_{p(\vec{\theta} | \vec{y})}\Big[(\vec{y} - \bar{\mat{H}}_{\vec{\theta}} \vec{u})^\top \bar{\mat{R}}_{\vec{\theta}}^{-1} (\vec{y} - \bar{\mat{H}}_{\vec{\theta}} \vec{u}) + \frac12 \mathcal{L}_{\vec{\theta}}[\vec{u}]^\top \bar{\mat{Q}}_{\vec{\theta}}^{-1} \mathcal{L}_{\vec{\theta}}[\vec{u}]\Big].\label{eq:averaged-4dvar-cost}
\end{align}
The Gauss--Newton iteration for minimising the cost \eqref{eq:averaged-4dvar-cost} reads
\begin{align}
        \vec{u}^{(n+1)}_0 &= \vec{u}^{(n)}_0 - \gamma \mat{B}^{-1} \nabla_{\vec{u}} \mathbb{E}_{p(\vec{\theta} | \vec{y})}\big[J_{\vec{\theta}}[\vec{u}_0^{(n)}]\big] \\
        &= \vec{u}^{(n)}_0 - \gamma \mat{B}^{-1} \mathbb{E}_{p(\vec{\theta} | \vec{y})}\big[\nabla_{\vec{u}} J_{\vec{\theta}}[\vec{u}_0^{(n)}]\big] \\
        &= \vec{u}^{(n)}_0 - \gamma \mat{B}^{-1} \mathbb{E}_{p(\vec{\theta} | \vec{y})}\left[\mat{L}^{(n) \top}_{\vec{\theta}} \bar{\mat{Q}}_{\vec{\theta}}^{-1} \mathcal{L}_{\vec{\theta}}[\vec{u}^{(n)}_0] + \bar{\mat{H}}^\top_{\vec{\theta}} \bar{\mat{R}}^{-1}_{\vec{\theta}}(\bar{\mat{H}}_{\vec{\theta}}\vec{u}^{(n)}_0 - \vec{y})\right],
    \end{align}
    where $\gamma \in (0, 1)$ is the learning rate, $\mat{L}^{(n)}_{\vec{\theta}} := \nabla \mathcal{L}_{\vec{\theta}}[\vec{u}^{(n)}_0]$ and
    \begin{align}
        \mat{B} &:= \mathbb{E}_{p(\vec{\theta} | \vec{y})}\big[\bar{\mat{H}}^\top_{\vec{\theta}} \bar{\mat{R}}^{-1}_{\vec{\theta}} \bar{\mat{H}}_{\vec{\theta}} + \mat{L}^{(n) \top}_{\vec{\theta}} \bar{\mat{Q}}^{-1}_{\vec{\theta}} \mat{L}^{(n)}_{\vec{\theta}}\big]
    \end{align}
    is the preconditioner, given by the Gauss--Newton approximation of the Hessian of $\mathbb{E}_{p(\vec{\theta} | \vec{y})}\big[J_{\vec{\theta}}[\vec{u}]\big]$. Now by a similar calculation to that in the proof of Proposition \ref{eq:connection-to-4dvar}, one can check that
    \begin{align}\label{eq:gn-update-avg-4dvar}
        \vec{u}^{(n+1)}_0 = (1 - \gamma) \vec{u}^{(n)}_0 + \gamma \mat{B}^{-1} \mathbb{E}_{p(\vec{\theta} | \vec{y})}\Big[\mat{L}^{(n) \top}_{\vec{\theta}} \bar{\mat{Q}}^{-1}_{\vec{\theta}} \vec{m}^{(n)}_{\vec{\theta}} + \bar{\mat{H}}^\top_{\vec{\theta}} \bar{\mat{R}}^{-1}_{\vec{\theta}}\vec{y} \Big]
    \end{align}
    holds, where as before, we denoted
    \begin{align}
        \vec{m}^{(n)}_{\vec{\theta}} := \mat{L}^{(n)}_{\vec{\theta}} \vec{u}^{(n)}_0 - \mathcal{L}_{\vec{\theta}}[\vec{u}^{(n)}_0].
    \end{align}
    Next, we claim that
    \begin{align}
        \mat{B}^{-1} \mathbb{E}_{p(\vec{\theta} | \vec{y})}\Big[\mat{L}^{(n) \top}_{\vec{\theta}} \bar{\mat{Q}}^{-1}_{\vec{\theta}} \vec{m}^{(n)}_{\vec{\theta}} + \bar{\mat{H}}^\top_{\vec{\theta}} \bar{\mat{R}}^{-1}_{\vec{\theta}}\vec{y} \Big] \approx (\bar{\mat{P}}^{(n)})^{-1} \bar{\vec{b}}^{(n)},
    \end{align}
    where
    \begin{align}
        &\bar{\mat{P}}^{(n)} := \sum_k \mat{P}^{(n)}_{\vec{u}|\vec{y}}(\vec{\theta}_k)\tilde{p}^{(n)}(\vec{\theta}_k | \vec{y}) \Delta_k \label{eq:avg-P} \\
        &\bar{\vec{b}}^{(n)} := \sum_k \mat{P}^{(n)}_{\vec{u}|\vec{y}}(\vec{\theta}_k) \vec{\mu}^{(n)}_{\vec{u}|\vec{y}}(\vec{\theta}_k)\tilde{p}^{(n)}(\vec{\theta}_k | \vec{y}) \Delta_k, \label{eq:avg-b}
    \end{align}
    To see this, recall from \eqref{eq:nth-posterior-precision}--\eqref{eq:nth-posterior-mean} that
    \begin{align}
    \mat{P}_{\vec{u} | \vec{y}}^{(n)}(\vec{\theta}) &= \bar{\mat{H}}^\top_{\vec{\theta}} \bar{\mat{R}}^{-1}_{\vec{\theta}} \bar{\mat{H}}_{\vec{\theta}} + \mat{L}^{(n) \top}_{\vec{\theta}} \bar{\mat{Q}}^{-1}_{\vec{\theta}} \mat{L}^{(n)}_{\vec{\theta}} \\
    \vec{\mu}_{\vec{u} | \vec{y}}^{(n)}(\vec{\theta})
    &= \mat{P}_{\vec{u} | \vec{y}}^{(n)}(\vec{\theta})^{-1} \left[\mat{L}^{(n) \top}_{\vec{\theta}} \bar{\mat{Q}}^{-1}_{\vec{\theta}} \vec{m}^{(n)}_{\vec{\theta}} + \bar{\mat{H}}^\top_{\vec{\theta}} \bar{\mat{R}}^{-1}_{\vec{\theta}}\vec{y}\right].
\end{align}
This gives us the approximations
\begin{align}
    &\mat{B} \approx \mathbb{E}_{\tilde{p}^{(n)}(\vec{\theta} | \vec{y})}\big[\bar{\mat{H}}^\top_{\vec{\theta}} \bar{\mat{R}}^{-1}_{\vec{\theta}} \bar{\mat{H}}_{\vec{\theta}} + \mat{L}^{(n) \top}_{\vec{\theta}} \bar{\mat{Q}}^{-1}_{\vec{\theta}} \mat{L}^{(n)}_{\vec{\theta}}\big] \approx \bar{\mat{P}}^{(n)}, \quad \text{and} \\
    &\mathbb{E}_{p(\vec{\theta} | \vec{y})}\Big[\mat{L}^{(n) \top}_{\vec{\theta}} \bar{\mat{Q}}^{-1}_{\vec{\theta}} \vec{m}^{(n)}_{\vec{\theta}} + \bar{\mat{H}}^\top_{\vec{\theta}} \bar{\mat{R}}^{-1}_{\vec{\theta}}\vec{y} \Big] \approx \mathbb{E}_{\tilde{p}^{(n)}(\vec{\theta} | \vec{y})}\Big[\mat{L}^{(n) \top}_{\vec{\theta}} \bar{\mat{Q}}^{-1}_{\vec{\theta}} \vec{m}^{(n)}_{\vec{\theta}} + \bar{\mat{H}}^\top_{\vec{\theta}} \bar{\mat{R}}^{-1}_{\vec{\theta}}\vec{y} \Big] \approx \bar{\vec{b}}^{(n)},
\end{align}
which proves our claim. Hence, we have shown that
\begin{align}
    \eqref{eq:gn-update-avg-4dvar} \approx (1 - \gamma) \vec{u}^{(n)}_0 + \gamma (\bar{\mat{P}}^{(n)})^{-1} \bar{\vec{b}}^{(n)},
\end{align}
where the RHS is precisely the update rule \eqref{eq:Q-bar}--\eqref{eq:damped-update-2}. Note that for the approximations to be accurate, we require that (i) the estimates $\tilde{p}^{(n)}(\vec{\theta}|\vec{y})$ are close to the true posteriors $p(\vec{\theta}|\vec{y})$, and (ii) the numerical integrals \eqref{eq:avg-P} and \eqref{eq:avg-b} approximate closely the quantities $\mathbb{E}_{p(\vec{\theta}|\vec{y})}[\bar{\mat{P}}^{(n)}_{\vec{u}|\vec{y}}(\vec{\theta})]$ and $\mathbb{E}_{p(\vec{\theta}|\vec{y})}[\bar{\vec{b}}^{(n)}_{\vec{u}|\vec{y}}(\vec{\theta})]$, respectively.
\end{proof}

\subsection{Further discussion of the results}\label{app:interpretations}
Here we provide discussion about interpretations and further error analysis of our results.

\subsubsection{Proposition \ref{eq:connection-to-4dvar}} \label{app:interpretation-result-1}
This result shows that at convergence, the linearisation point $\vec{u}_0^{(\infty)}$ is the MAP estimate of $p(\vec{u}|\vec{y}, \vec{\theta})$. This is also true for the corresponding posterior mean $\vec{\mu}_{\vec{u}|\vec{y}}^{(\infty)}(\vec{\theta})$, which we can show is identical to $\vec{u}_0^{(\infty)}$ (to see this, take $\vec{u}_0^{(n+1)} = \vec{u}_0^{(n)} = \vec{u}_0^{(\infty)}$ in the update formula \eqref{eq:damped-update-1}). Furthermore, the converged posterior precision reads (from \eqref{eq:nth-posterior-precision})
\begin{align}
    \mat{P}^{(\infty)}_{\vec{u}|\vec{y}}(\vec{\theta}) &= \bar{\mat{H}}^\top \bar{\mat{R}}^{-1} \bar{\mat{H}} + \mat{L}^{(\infty) \top} \bar{\mat{Q}}^{-1} \mat{L}^{(\infty)},
\end{align}
where $\mat{L}^{(\infty)} := \nabla \mathcal{L}[\vec{u}^{(\infty)}_0]$. This is an approximation to the Hessian of the 4D-Var cost $J = -\log p(\vec{u} | \vec{y}, \vec{\theta})$:
\begin{align}
    \nabla^2 J[\vec{u}^{(\infty)}_0] = \bar{\mat{H}}^\top \bar{\mat{R}}^{-1} \bar{\mat{H}} + \mat{L}^{(\infty) \top} \bar{\mat{Q}}^{-1} \mat{L}^{(\infty)} + \nabla^2 \mathcal{L}[\vec{u}^{(\infty)}_0] \bar{\mat{Q}}^{-1} \mathcal{L}[\vec{u}^{(\infty)}_0],
\end{align}
which we refer to as the Gauss-Newton approximation of the Hessian. The only difference with the true Hessian is the term $\nabla^2 \mathcal{L}[\vec{u}^{(\infty)}_0] \bar{\mat{Q}}^{-1} \mathcal{L}[\vec{u}^{(\infty)}_0]$, which is small if $\nabla^2 \mathcal{L}[\vec{u}^{(\infty)}_0]$ or $\mathcal{L}[\vec{u}^{(\infty)}_0]$ is small. The former holds if the dynamics is weakly nonlinear and the latter holds if $\vec{u}^{(\infty)}_0$ is close to the solution of the deterministic system starting from $\vec{u}_0 = \vec{u}_b$ (see \eqref{eq:l-components}).
Now, the Laplace approximation for the distribution $p(\vec{u}|\vec{y}, \vec{\theta})$ is given by
\begin{align}
    p(\vec{u}|\vec{y}, \vec{\theta}) \approx \mathcal{N}(\vec{u} | \vec{u}_*, \nabla^2 J[\vec{u}_*]^{-1}), \quad \text{where} \quad \vec{u}_* := \mathrm{argmin}_{\vec{u}}\{J[\vec{u}]\}.
\end{align}
Hence, assuming that $\mat{P}^{(\infty)}_{\vec{u}|\vec{y}}(\vec{\theta}) \approx \nabla^2 J[\vec{u}_*]$, we see that the outputs of Algorithm \ref{alg:iterated-INLA-1} can be interpreted as the marginals of a ``Gauss-Newton-Laplace'' approximation of the posterior $p(\vec{u}|\vec{y}, \vec{\theta})$.

\subsubsection{Proposition \ref{eq:connection-to-averaged-4dvar}} \label{app:interpretation-result-2}
In this result, we see that the converged point $\vec{u}_0^{(\infty)}$ using type-II iterated INLA is an approximate MAP estimate of the marginal posterior $p(\vec{u}|\vec{y})$, whose log-transform is lower bounded by a surrogate cost $\mathbb{E}_{p(\vec{\theta}|\vec{y})}[\log p(\vec{u} | \vec{y}, \vec{\theta})]$ that is being optimised by the algorithm. One can check that the gap between the two quantities can be characterised exactly as
\begin{align}
    \log p(\vec{u} | \vec{y}) - \mathbb{E}_{p(\vec{\theta}|\vec{y})}[\log p(\vec{u} | \vec{y}, \vec{\theta})] = \mathcal{KL}(p(\vec{\theta}|\vec{y}) || p(\vec{\theta}|\vec{u}, \vec{y})).
\end{align}
Thus, the mode $\vec{u}_*$ of $\mathbb{E}_{p(\vec{\theta}|\vec{y})}[\log p(\vec{u} | \vec{y}, \vec{\theta})]$ is close to the mode of $\log p(\vec{u} | \vec{y})$ provided $\mathcal{KL}(p(\vec{\theta}|\vec{y}) || p(\vec{\theta}|\vec{u}_*, \vec{y})) \approx 0$. To see this, assuming $\mathcal{KL}(p(\vec{\theta}|\vec{y}) || p(\vec{\theta}|\vec{u}_*, \vec{y})) = 0$, we have $\nabla_{\vec{u}}\mathcal{KL}(p(\vec{\theta}|\vec{y}) || p(\vec{\theta}|\vec{u}, \vec{y}))|_{\vec{u} = \vec{u}_*} = \vec{0}$ since the KL-divergence is always non-negative. Hence the gradient of $\log p(\vec{u} | \vec{y})$ also vanishes at $\vec{u}_*$, making it a mode.

The assumption that $\mathcal{KL}(p(\vec{\theta}|\vec{y}) || p(\vec{\theta}|\vec{u}_*, \vec{y})) \approx 0$ should hold if for example $p(\vec{\theta}|\vec{y})$ is very peaked and depends weakly on $\vec{u}$. In this case, the MAP estimate of $p(\vec{u}|\vec{y})$ should be reliably approximated by $\vec{u}_0^{(\infty)}$.

\section{Experiment details}\label{app:experiment-details}
\subsection{Metrics}
In our experiments, we use the following metrics to benchmark our results.

\paragraph{Root Mean Square Error (RMSE):}
The root mean squared error quantifies the average deviation of an estimate of a quantity from its ground truth value. Denoting by $\vec{u}^{gt} \in \mathbb{R}^{d_u}$ the ground truth and $\hat{\vec{u}} \in \mathbb{R}^{d_u}$ our estimate for it, then the RMSE is computed as follows.
\begin{align}
    \text{RMSE}(\vec{u}^{gt}, \hat{\vec{u}}) = \sqrt{\frac{1}{d_u} \sum_{i=1}^{d_u} \|\hat{u}_i - u_i^{gt}\|^2}
\end{align}
The choice of the estimated quantity $\hat{\vec{u}}$ depends on our model. For instance, if the outputs are Gaussian, then a sensible choice is its mean, or if its non-Gaussian, then we may also choose its median or mode. For iterated INLA, we choose the converged linearisation points $\vec{u}_0^{(\infty)}$ as our estimator since by Propositions \ref{eq:connection-to-4dvar} and \ref{eq:connection-to-averaged-4dvar}, these approximate the mode of the corresponding distributions.

\paragraph{Mean Negative Log-Likelihood (MNLL):}
Another useful metric to use is the negative log-likelihood, which also evaluates the quality of uncertainties produced by our models. This is computed as
\begin{align}
    \text{MNLL}(\vec{u}^{gt}, \tilde{p}(u_i | \vec{y})) = \frac{1}{d_u} \sum_{i=1}^{d_u} \left.\Big(- \log \tilde{p}(u_i | \vec{y})\Big)\right|_{u_i = u_i^{gt}},
\end{align}
where $\tilde{p}(u_i | \vec{y})$ are the estimated marginal posteriors from our inference methods.

\paragraph{Maximum Mean Discrepancy (MMD):}
The maximum mean discrepancy (MMD) compares the similarity of two probability distributions $\pi_1$ and $\pi_2$ based on their samples. Given samples $\vec{u}_n \sim \pi_1$ for $n = 1, \ldots, N$ and $\vec{v}_m \sim \pi_2$ for $m = 1, \ldots, M$, the MMD between $\pi_1$ and $\pi_2$ is computed as \citep{gretton2012kernel}
\begin{align}
\begin{split}
    \text{MMD}(\{\vec{u_n}\}_{n=1}^N, \{\vec{v}_m\}_{m=1}^M) &= \frac{1}{N} \frac{1}{(N-1)}\sum_{n=1}^N \sum_{m=1}^N k(\vec{u}_n, \vec{u}_m) - 2 \frac{1}{NM} \sum_{n=1}^N \sum_{m=1}^M k(\vec{u}_n, \vec{v}_m) \\
    &\quad + \frac{1}{M} \frac{1}{(M-1)}\sum_{n=1}^M \sum_{m=1}^M k(\vec{v}_n, \vec{v}_m).
\end{split}
\end{align}
Here, $k(\cdot, \cdot)$ is a kernel, which we choose to be squared exponential by default. In our pendulum experiment, each sample $\vec{u}_n$ is a vector whose $i$-th component is a sample from the marginal posterior $p(u_i | \vec{y})$. Computing the MMD for every time slice and taking its average can be very time consuming, so instead we compute the MMD once on the product distribution $\prod_{i=1}^{N_t} p(u_i| \vec{y})$ with a kernel defined on $\mathbb{R}^{N_t}$. For this, one must be careful to avoid having correlations between two consecutive time steps. For instance, a sample trajectory from a particle smoother or the ensemble Kalman smoother will have strong correlations between consecutive timesteps since these are samples from the {\em joint} distribution $p(\vec{u}|\vec{y})$. Thus in these situations, one must ensure to scramble the particles at each time step before computing the MMD to ensure correct sampling from the product distribution $\prod_{i=1}^{N_t} p(u_i| \vec{y})$.

\subsection{Stochastic pendulum experiment}\label{app:pend-experiment}
Here, we provide details on the pendulum experiment presented in Section \ref{sec:pendulum}.

\subsubsection{Model configuration}
The stochastic nonlinear pendulum system is described by the equation
\begin{align} \label{eq:stoch-pendulum-app}
    \frac{\mathrm{d}^2 u}{\mathrm{d}t^2} + b \frac{\mathrm{d} u}{\mathrm{d}t} + c \sin(u) = \sigma_u \dot{W}_t.
\end{align}
Here, $b, c > 0$ are some constants describing the damping and forcing rates resepectively, $\sigma_u > 0$ is the process noise amplitude and $W_t$ is a 1D Wiener process. This describes the nonlinear dynamics of a damped pendulum, oscillating under the influence of gravity and continuously perturbed by random forces.

More rigorously, we interpret the equation \eqref{eq:stoch-pendulum-app} as a coupled first-order It\^o diffusion process
\begin{align}\label{eq:stoch-pendulum-sde-form}
    \begin{cases}
        \diff u &= \omega \, \diff t \\
        \diff \omega &= -b \omega \,\diff t - c \sin(u) \,\diff t + \sigma_u \,\diff W_t.
    \end{cases}
\end{align}

Here, $u \in [-\pi, \pi]$ describes the dynamics of the angle of the pendulum and $\omega \in \mathbb{R}$ is the angular velocity of the system. For the ground truth, we simulated the dynamics of \eqref{eq:stoch-pendulum-sde-form} on various random seeds starting from $u(0) = 0.75 \pi, u'(0) = 0$ and ran for $t \in [0, 25]$ using the Euler-Maruyama scheme with a timestep of $\Delta t=0.01$. We fixed the values $b=0.3$, $c=1.0$ and $\sigma_u = 0.2$ throughout the experiment.
For the observations $\vec{y}$, we randomly selected $5\%$ of gridpoints $t_n$ within the time interval $[0, 10]$, then sampled from i.i.d. Gaussians $y_n \sim \mathcal{N}(u(t_n), \sigma_y^2)$ with observation noise $\sigma_y = 0.1$. 

For the priors on the parameters, we used log normal distributions in order to ensure positivity. In particular, we took
\begin{align}
    b &\sim \text{LogNormal}(-1.36, 0.5), \label{eq:b-prior}\\
    c &\sim \text{LogNormal}(1.69,  1.0), \label{eq:c-prior}\\
    \sigma_u &\sim \text{LogNormal}(-2.05, 0.5), \label{eq:su-prior}\\
    \sigma_y &\sim \text{LogNormal}(-2.05, 0.5).\label{eq:sy-prior}
\end{align}
Note that $z \sim \text{LogNormal}(\mu, \sigma)$ means $z = e^x$ for $x \sim \mathcal{N}(\mu, \sigma^2)$. The modes of the distributions are $0.2, 2.0, 0.1$ and $0.1$ respectively.

\subsubsection{Linearisation}
To linearise the system \eqref{eq:stoch-pendulum-app} around a point $u_0$, let $|u - u_0| = \mathcal{O}(\epsilon)$ for $\epsilon <\!\!< 1$. Then by Taylor expansion, we have
\begin{align}
    \sin(u) = \sin(u_0) + \cos(u_0) (u-u_0) + \mathcal{O}(\epsilon^2).
\end{align}
Substituting this into the LHS of \eqref{eq:stoch-pendulum-app}, we get
\begin{align}
    \mathcal{L}[u] &:= \frac{\mathrm{d}^2 u}{\mathrm{d}t^2} + b \frac{\mathrm{d} u}{\mathrm{d}t} + c \sin(u) \\
    &\approx \frac{\mathrm{d}^2 u}{\mathrm{d}t^2} + b \frac{\mathrm{d} u}{\mathrm{d}t} + c \big(\sin(u_0) + \cos(u_0) (u-u_0)\big) + \mathcal{O}(\epsilon^2) \\
    &= \left(\frac{\mathrm{d}^2 u}{\mathrm{d}t^2} + b \frac{\mathrm{d} u}{\mathrm{d}t} + c \cos(u_0) u \right) - c\big(u_0 \cos(u_0) - \sin(u_0)\big) + \mathcal{O}(\epsilon^2).
\end{align}
This gives us
\begin{align}\label{eq:linearised-pendulum}
    \mathcal{L}_0u = \frac{\mathrm{d}^2 u}{\mathrm{d}t^2} + b \frac{\mathrm{d} u}{\mathrm{d}t} + c \cos(u_0) u \quad \text{and} \quad r_0 = c\big(u_0 \cos(u_0) - \sin(u_0)\big).
\end{align}

\subsubsection{Settings for iterated INLA}
We used the zero function $u(t) \equiv 0$ as the initial linearisation point $u^{(0)}_0$ and set the damping rate to $\gamma = 0.3$. For the acceptance threshold in \eqref{eq:acceptance-criteria-appendix}, we chose $\delta = 5$. We set the number of iterations to $25$. To discretise the linear model \eqref{eq:linearised-pendulum}, we used centered finite differences with a grid size of $\Delta t = 0.01$. See Appendix \ref{app:discretisation} for more details on the discretisation.

\subsubsection{Baseline details}\label{app:pendulum-baselines}

Here, we provide further details on the baseline models we used for comparison.

\paragraph{Sequential Monte-Carlo (SMC).} 
For our SMC method, we split the procedure into two parts. The first part samples the parameters $\vec{\theta}_k \sim p(\vec{\theta} | \vec{y})$ using the particle marginal Metropolis-Hastings (PMMH) method in \cite{andrieu2010particle}. This is a Metropolis-Hastings algorithm that approximates the term $p(\vec{y}|\vec{\theta})$ in the acceptance ratio $p(\vec{y}|\vec{\theta}')p(\vec{\theta}') / p(\vec{y}|\vec{\theta}_n)p(\vec{\theta}_n)$ using a bootstrap particle filter with fixed $\vec{\theta}$. We used $1,000$ particles for the bootstrap filter and sampled $10,000$ parameters from $p(\vec{\theta} | \vec{y})$. We used a burn-in period of $1,000$. In the next step, we used the generated parameters $\vec{\theta}_k$ and computed a sample $\vec{u}_k$ of $p(\vec{u} | \vec{\theta}_k, \vec{y})$ for each $k$ using the bootstrap particle smoother \citep{chopin2020introduction}, which are precisely the samples of $p(\vec{u}|\vec{y})$.
The Euler-Maruyama scheme was used to simulate the dynamics \eqref{eq:stoch-pendulum-sde-form} with a time step of $\Delta t=0.01$.
For the implementation of PMMH and the bootstrap particle filter/smoother, we used the python package \texttt{particles}\footnote{\url{https://github.com/nchopin/particles}}.

\paragraph{Gaussian process regression (GPR).}
We used a Gaussian process with the standard RBF kernel (i.e., squared exponential kernel) initialised with unit lenghscale and amplitude. The standard deviation in the Gaussian likelihood was initialised at $0.1$. The hyperparameters were tuned via type-II maximum likelihood estimation, performed using the L-BFGS-B optimiser.

\paragraph{Ensemble Kalman Smoother (EnKS).} We used the EnKS implementation in DAPPER \cite{dapper}, which by default, uses the deterministic (i.e., Ensemble Transform) variant of the EnKS, typically considered state of the art. We used $100$ ensemble members with no inflation; we found that the inflation caused instability when learning the noise parameter $\sigma_u$ and found better results without it. We used the Euler-Maruyama discretisation with a time step of $\Delta t=0.01$ to propagate the dynamics of \eqref{eq:stoch-pendulum-sde-form} forward in time. We also evolved the parameters by persistent dynamics, i.e., $\vec{\theta}_{n+1} = \vec{\theta}_{n}$ to jointly infer the state and parameters using the state-augmentation method \cite{evensen2009ensemble}. The parameters were propagated in log-space to retain positivity. This also made it consistent with the log-normal priors used for the parameters; at initialisation, the parameters were sampled exactly from \eqref{eq:b-prior}--\eqref{eq:sy-prior}. For the initial state, we used a Gaussian around the true initial values with a standard deviation of 0.1.

\paragraph{Iterated Ensemble Kalman Smoother (iEnKS).}
For a single iteration of the EnKS, we used the same configuration of EnKS as above. The result is displayed for 10 iterations.

\paragraph{AutoIP.} For this experiment, we used the original pendulum code accompanying \cite{long2022autoip} to generate our results\footnote{\url{https://github.com/long-da/A-United-Framework-to-Integrate-Physics-into-Gaussian-Processes}}. In particular, they conduct a similar pendulum experiment in their work and we used the same code without modification. To initialise the hyperparameters $b, c, \sigma_u$ and $\sigma_y$, we used the modal values of the respective priors. All of the parameters were learned alongside the variational parameters in the variational inference employed in AutoIP. We trained the model for $1500$ epochs with early stopping and the optimisation was performed using Adam with the default parameters and a learning rate set to $0.01$.

\subsection{Burgers' experiment}\label{app:burgers}
The 1D Burgers' equation is given by
\begin{align}\label{eq:burgers-eq-app}
    \frac{\partial u}{\partial t} + u \frac{\partial u}{\partial x} - \nu \frac{\partial^2 u}{\partial x^2} = 0,
\end{align}
where $\nu > 0$ is the viscosity parameter. We assume periodic boundary conditions on $x \in [-1, 1]$ and used the initial condition $u(x, 0) = -\sin(\pi x)$. To generate the ground truth, we used the pseudospectral method of lines, integrating between the period $t \in [0, 0.5]$ with a time step of $\Delta t = 0.01$ and setting the viscosity parameter to $\nu = 0.02$.

For the observations, we uniformly sampled 20 observations each from the ground truth along two strips, one at $t=0$ and another at $t=0.26$, for a total of $40$ observation points. These were then perturbed independently by centered Gaussian observation noise with a standard deviation of $0.1$.

\subsubsection{Linearisation}\label{app:burgers-linearisation}
Take $u = u_0 + \epsilon v$ for $\epsilon <\!\!< 1$. Then, the nonlinear advection term in \eqref{eq:burgers-eq-app} can be linearised as follows
\begin{align}
u\frac{\partial u}{\partial x} &= (u_0 + \epsilon\,v) \frac{\partial}{\partial x}(u_0 + \epsilon\,v) \\
&= u_0 \frac{\partial u_0}{\partial x} + \epsilon \left(u_0 \frac{\partial v}{\partial x} + v \frac{\partial u_0}{\partial x}\right) + \mathcal{O}(\epsilon^2) \\
&= u_0 \frac{\partial u_0}{\partial x} + u_0 \frac{\partial}{\partial x}(\epsilon v) + (\epsilon v) \frac{\partial u_0}{\partial x} + \mathcal{O}(\epsilon^2) \\
&= u_0 \frac{\partial u_0}{\partial x} + u_0 \frac{\partial}{\partial x}(u - u_0) + (u - u_0) \frac{\partial u_0}{\partial x} + \mathcal{O}(\epsilon^2) \\
&= \cancel{u_0 \frac{\partial u_0}{\partial x} - u_0 \frac{\partial u_0}{\partial x}} + u_0 \frac{\partial u}{\partial x} + u \frac{\partial u_0}{\partial x} - u_0 \frac{\partial u_0}{\partial x}+ \mathcal{O}(\epsilon^2) \\
&= u_0 \frac{\partial u}{\partial x} + u \frac{\partial u_0}{\partial x} - u_0 \frac{\partial u_0}{\partial x} + \mathcal{O}(\epsilon^2).
\end{align}
Plugging this back into the LHS of \eqref{eq:burgers-eq-app}, we get
\begin{align}
    \mathcal{L}[u] &= \frac{\partial u}{\partial t} + u \frac{\partial u}{\partial x} - \nu \frac{\partial^2 u}{\partial x^2} \\
    &= \frac{\partial u}{\partial t} + \left(u_0 \frac{\partial u}{\partial x} + u \frac{\partial u_0}{\partial x} - u_0 \frac{\partial u_0}{\partial x}\right) - \nu \frac{\partial^2 u}{\partial x^2} + \mathcal{O}(\epsilon^2) \\
    &= \left(\frac{\partial u}{\partial t} + u_0 \frac{\partial u}{\partial x} + u \frac{\partial u_0}{\partial x} - \nu \frac{\partial^2 u}{\partial x^2}\right) - u_0 \frac{\partial u_0}{\partial x} + \mathcal{O}(\epsilon^2).
\end{align}
Thus, we have
\begin{align}\label{eq:linearised-burgers}
    \mathcal{L}_0 u = \frac{\partial u}{\partial t} + u_0 \frac{\partial u}{\partial x} + u \frac{\partial u_0}{\partial x} - \nu \frac{\partial^2 u}{\partial x^2} \quad \text{and} \quad r_0 = u_0 \frac{\partial u_0}{\partial x}.
\end{align}

\subsubsection{Settings for iterated INLA}
We imposed priors on the viscosity parameter $\nu$ and the process noise amplitude $\sigma_u$. Again, we used log normal distributions in order to ensure positivity of the parameters. In particular, we took
\begin{align}
    \nu &\sim \text{LogNormal}(-2.0, 1.0), \label{eq:burgers-nu-prior}\\
    \sigma_u &\sim \text{LogNormal}(-3.6, 1.0),\label{eq:burgers-sigma-prior}\\
\end{align}
which has modes $0.05$ and $0.01$ respectively.
The linearisation point was initialised by a solution to the Burgers' equation with initial condition $u_b$ (the background field) and the parameter $\nu$ set to $0.05$, i.e., the mode of the prior on $\nu$, not the ground truth value of $0.01$. The background field $u_b$ was taken as the prediction from the GPR baseline (see Appendix \ref{app:burgers-baselines}) at time $t=0$. We set the number of iterations to $10$. For this experiment, we used a damping rate of $\gamma = 0.5$ and acceptance threshold of $\delta = 3$. Discretisation was performed using the central finite difference scheme with $\Delta t = 0.02$ and $\Delta x = 0.04$.

\subsubsection{Baseline details}\label{app:burgers-baselines}

\paragraph{GPR.} We used the same GPR setting as described in Appendix \ref{app:pendulum-baselines}.

\paragraph{EnKS.} We used 100 ensemble members with no inflation. For the time-stepping, we used a variant of the fourth-order Runge-Kutta scheme in \cite{kassam2005fourth} with a time step of $\Delta t = 10^{-3}$ and a spatial step of $\Delta x = 0.04$. We found that the standard Runge-Kutta scheme led to unstable solutions, especially when jointly learning the parameter.

\paragraph{iEnKS.}
For a single iteration of the EnKS, we used the same configuration of EnKS as above. We used 30 iterations to produce the final result.

\paragraph{AutoIP.} We adapted the original AutoIP code to accommodate the Burgers' system. We trained the $\nu$ parameter and the parameter corresponding to our noise process $\sigma_u$, setting the initial values to the mode of the respective priors \eqref{eq:burgers-nu-prior}--\eqref{eq:burgers-sigma-prior}.
We set the initial lengthscales of the latent GP to $l_x = 0.5$ and $l_t = 0.5$.
The model was trained for 2000 epochs with early stopping and optimisation was performed with Adam with the learning rate set to $0.01$.

\subsection{Allen-Cahn experiment}\label{app:allen-cahn}
The 1D Allen-Cahn equation is given by
\begin{align}\label{eq:allen-cahn-app}
    \frac{\partial u}{\partial t} - \gamma \frac{\partial^2 u}{\partial x^2} + f(u) = 0,
\end{align}
where $f(u)$ is the source term, which we take to be $f(u) = \beta(u^3 - u)$ for some $\beta > 0$. For the ground truth, we used the pre-computed simulation found in the PINNs GitHub repository\footnote{\url{https://github.com/maziarraissi/PINNs/tree/master}}. This has the configuration $\beta = 5$, $\gamma = 10^{-4}$, with periodic boundary conditions and an initial condition set to $u(x, 0) = x^2 \cos(\pi x)$. The simulation is for times $t \in [0,1]$ with a spatial domain of size $[-1, 1]$.

We sampled 256 random observation from the ground truth from a uniform distribution in $(t, x) \in [0, 0.28] \times [-1, 1]$. The values were then perturbed independently by a centered Gaussian noise with standard deviation $\sigma_y = 0.01$.

\subsubsection{Linearisation}
We approximate the nonlinear term $u^3$ around a point $u_0$ by Taylor expansion
\begin{align}
    u^3 \approx u_0^3 +3u_0^2 (u-u_0) + \mathcal{O}(\epsilon^2).
\end{align}
Substituting this expression into the LHS of \eqref{eq:allen-cahn-app}, we get
\begin{align}
    \mathcal{L}[u] &= \frac{\partial u}{\partial t} - \gamma \frac{\partial^2 u}{\partial x^2} + \beta(u_0^3 + 3u_0^2 (u-u_0)) - \beta u + \mathcal{O}(\epsilon^2) \\
    &= \left(\frac{\partial u}{\partial t} - \gamma \frac{\partial^2 u}{\partial x^2} + 3\beta u_0^2 u - \beta u\right) - 2\beta u_0^3 + \mathcal{O}(\epsilon^2).
\end{align}
Hence, we have
\begin{align}\label{eq:linearised-ac}
    \mathcal{L}_0 u = \frac{\partial u}{\partial t} - \gamma \frac{\partial^2 u}{\partial x^2} + 3\beta u_0^2 u - \beta u \quad \text{and}\quad r_0 = 2\beta u_0^3.
\end{align}

\subsubsection{Settings for iterated INLA.}
We imposed priors on the $\beta$ parameter and the process noise parameter $\sigma_u$. The $\gamma$ parameter and the observation noise parameter $\sigma_y$ was held fixed. For the trainable parameters, we took the priors
\begin{align}
    \beta &\sim \text{LogNormal}(2.10, 1.0), \label{eq:ac-beta-prior}\\
    \sigma_u &\sim \text{LogNormal}(-3.60, 1.0),\label{eq:ac-sigma-prior}\\
\end{align}
which has modes $3.0$ and $0.01$, respectively. As with the Burgers' experiment, we initialised the linearisation point by a solution to the Allen-Cahn system with initial condition $u_b$, obtained by the prediction of the GPR baseline at $t=0$, and the learnable parameters set to its respective prior mode. We set the number of iterations to $10$. For discretisation of the linearised operator, we used a vanilla central finite difference scheme with $\Delta t = 0.02$ and $\Delta x = 1/64$.

\subsubsection{Baseline details}

\paragraph{GPR.} We used the same GPR setting as described in Appendix \ref{app:pendulum-baselines}.

\paragraph{EnKS.} We used $100$ ensemble members with no inflation. The time stepping was performed using the same RK4 solver that we used for the EnKS baseline in the Burger's experiment (Appendix \ref{app:burgers-baselines}) with a time step of $\Delta t = 0.005$ and a spatial step of $\Delta x = 1/64$.

\paragraph{iEnKS.}
For a single iteration of the EnKS, we used the same configuration of EnKS as above. We used 30 iterations to produce the final result.

\paragraph{AutoIP.} The AutoIP code contained an Allen-Cahn example, which we used unchanged. We trained the $\beta$ parameter and the parameter corresponding to our noise process $\sigma_u$, setting the initial values to the mode of the respective priors \eqref{eq:ac-beta-prior}--\eqref{eq:ac-sigma-prior}.
We set the initial lengthscales of the latent GP to $l_x = 1.0$ and $l_t = 1.0$.
The model was trained for 2000 epochs with early stopping and optimisation was performed with Adam with the learning rate set to $0.01$.

\subsection{Korteweg-de Vries experiment}\label{app:kdv}
The Korteweg-de Vries (KdV) equation is given by
\begin{align}\label{eq:kdv-eq-app}
    \frac{\partial u}{\partial t} + \lambda_1 u \frac{\partial u}{\partial x} + \lambda_2 \frac{\partial^3 u}{\partial x^3} = 0,
\end{align}
modelling shallow water waves. Here, $\lambda_1$ and $\lambda_2$ are positive constants modelling the advection strength and dispersion rates respectively. Again, we used the pre-computed simulation found in the PINNs GitHub repository, which uses the configuration $\lambda_1 = 1.0, \lambda_2 = 0.0025$, periodic boundary condition and an initial condition of $u(x, 0) = \cos(\pi x)$. The simulation spans a time interval of $t \in [0, 1]$ and the spatial domain has size $x \in [-1, 1]$.

For the observations, we uniformly sampled 20 observations each from the ground truth along two strips, one at $t=0.2$ and another at $t=0.8$, for a total of $40$ observation points. These were then perturbed independently by centered Gaussian observation noise with a standard deviation of $10^{-3}$.

\subsubsection{Linearisation}
The linearisation procedure for the KdV equation is identical to that for the Burgers' equation so we refer the readers to section \ref{app:burgers-linearisation} for the details. The resulting linearisation of the KdV equation \eqref{eq:kdv-eq-app} reads
\begin{align}\label{eq:linearised-kdv}
    \mathcal{L}_0 u = \frac{\partial u}{\partial t} + \lambda_1 \left(u_0 \frac{\partial u}{\partial x} + u \frac{\partial u_0}{\partial x}\right) + \lambda_2 \frac{\partial^3 u}{\partial x^3} \quad \text{and} \quad r_0 = \lambda_1 u_0 \frac{\partial u_0}{\partial x}.
\end{align}

\subsubsection{Settings for iterated INLA.}
We imposed priors on the $\lambda_1$ parameter and the process noise parameter $\sigma_u$. The $\lambda_2$ parameter and the observation noise parameter $\sigma_y$ was held fixed. For the trainable parameters, we took the priors
\begin{align}
    \lambda_1 &\sim \text{LogNormal}(0.31, 1.0), \label{eq:kdv-lambda-prior}\\
    \sigma_u &\sim \text{LogNormal}(-3.60, 1.0),\label{eq:kdv-sigma-prior}\\
\end{align}
which has modes $0.5$ and $0.01$, respectively. As with the previous experiments, we initialised the linearisation point by a solution to the KdV system with initial condition $u_b$, obtained by the prediction of the GPR baseline at $t=0$, and the learnable parameters set to its respective prior mode. We set the number of iterations to $10$. For the discretisation of the linearised operator, we used a vanilla central finite difference scheme  with $\Delta t = 0.02$ and $\Delta x = 1/64$.

\subsubsection{Baseline details}
\paragraph{GPR.} We used the same GPR setting as described in Appendix \ref{app:pendulum-baselines}.

\paragraph{EnKS.} We used $100$ ensemble members with no inflation. The time stepping was performed using the same RK4 solver that we used in the previous experiments with a time step of $\Delta t = 0.005$ and a spatial step of $\Delta x = 1/64$.

\paragraph{iEnKS.}
For a single iteration of the EnKS, we used the same configuration of EnKS as above. We used 30 iterations to produce the final result.

\paragraph{AutoIP.} We adapted the original AutoIP code to accommodate the KdV system. We trained the $\lambda_1$ parameter and the parameter corresponding to our noise process $\sigma_u$, setting the initial values to the mode of the respective priors \eqref{eq:kdv-lambda-prior}--\eqref{eq:kdv-sigma-prior}.
We set the initial lengthscales of the latent GP to $l_x = 0.01$ and $l_t = 0.1$.
The model was trained for 2000 epochs with early stopping and optimisation was performed with Adam with the learning rate set to $0.01$.

\section{Discretisation details} \label{app:discretisation}
All discretisations are performed using finite differences with the python package \texttt{findiff} \citep{findiff}. We discretised the linearised operators \eqref{eq:linearised-pendulum}, \eqref{eq:linearised-burgers}, \eqref{eq:linearised-ac} and \eqref{eq:linearised-kdv} using second-order central finite differences, treating the spatial and temporal variables on the same footing (i.e., we do not consider forward time-stepping methods). We imposed appropriate boundary conditions depending on the problem. \texttt{findiff} implements Dirichlet and Neumann boundary conditions. However, some of the experiments require periodic boundary conditions,  hence we added this functionality in our fork of the package, which we do not disclose here to preserve anonymity. On the temporal boundaries, \texttt{findiff} by default uses a forward discretisation of the derivative at the initial time and backward discretisation at the final time, if the conditions are not specified. While this does not make sense physically, we found that using this default set up gave us sufficiently good results for our purpose. The random initial conditions were specified by additional likelihoods at the initial time, where we placed pseudo-observations $p(\vec{y} | \vec{u}_0)$ at time $t=0$ to mimic the initial condition prior $p(\vec{u}_0) = \mathcal{N}(\vec{u}_0 | \vec{u}_b, \mat{C})$. Mathematically, this is not a problem by taking  $\left.p(\vec{y} | \vec{u}_0)\right|_{\vec{y} = \vec{u}_b} = \left.\mathcal{N}(\vec{y} | \vec{u}_b, \mat{C})\right|_{\vec{y} = \vec{u}_b}$ as it results in the same posterior distribution. In the future, it might be interesting to encode the initial condition prior directly into the model, using proper temporal time-stepping schemes such as the Crank-Nicolson method to discretise parabolic PDEs.

For the spatio-temporal white noise process over a domain $[0, T] \times \mathbb{R}$, we use the discretisation
\begin{align}
    \dot{\mathcal{W}}^N(x, t) = \sum_{i=1}^N \frac{z_i}{\sqrt{\Delta x \Delta t}} \,\mathbf{1}_{C_i}(x, t), \quad \vec{z} \sim \mathcal{N}(0, \mat{I}),
\end{align}
where $C_i \subset [0, T] \times \mathbb{R}$ is an individual cell of a finite difference discretised spatio-temporal domain and $\mathbf{1}_{C_i}$ is the indicator function. To justify this, we use the following definition of the Gaussian white noise process.
\begin{definition}[\cite{lototsky2017stochastic}, Definition 3.2.10]\label{eq:lototsky-definition}
    A (centered) generalised Gaussian white noise process $\mathcal{B}$ over a Hilbert space $H$ is a collection of random variables $\mathcal{B} \in H^*$, such that
    \begin{enumerate}
        \item For every $f \in H$, we have $\mathcal{B}f = 0$.
        \item For every $f, g \in H$, we have $\mathbb{E}[\mathcal{B}f \,\mathcal{B}g] = \left<f, g\right>_{H}$.
    \end{enumerate}
\end{definition}
In particular, taking $H = L^2([0, T] \times \mathbb{R} ; \mathbb{R})$, we arrive at the space-time white noise process $\mathcal{B} = \dot{\mathcal{W}}$ that we consider here.
To see informally that $\dot{\mathcal{W}}^N$ approximates $\dot{\mathcal{W}}$, for every $f \in L^2([0, T] \times \mathbb{R}, \mathbb{R})$, we have
\begin{align}
    \left<\dot{\mathcal{W}}^N, f\right>_{L^2} = \sum_{i=1}^N \frac{z_i}{\sqrt{\Delta x \Delta t}} \iint_{C_i} f(x, t) \mathrm{d}x \mathrm{d}t.
\end{align}
Furthermore, for small $|C_i| = \Delta x \Delta t$, we have
\begin{align}
    \iint_{C_i} f(x, t) \mathrm{d}x \mathrm{d}t \approx f(x_i, t_i) \Delta x \Delta t,
\end{align}
for any $(x_i, t_i) \in C_i$. Thus, we have the approximation
\begin{align}
    \left<\dot{\mathcal{W}}^N, f\right>_{L^2} &\approx \sum_{i=1}^N \frac{z_i f(x_i, t_i)}{\sqrt{\Delta x \Delta t}} \Delta x \Delta t = \sum_{i=1}^N z_i f(x_i, t_i) \sqrt{\Delta x \Delta t}
\end{align}
and we see that $\left<\mathcal{W}^N, h\right>_{L^2}$ is Gaussian with moments
\begin{align}
    \mathbb{E}\left[\left<\dot{\mathcal{W}}^N, f\right>_{L^2}\right] &= \sum_{i=1}^N \underbrace{\mathbb{E}[z_i]}_{=0} f(x_i, t_i) \sqrt{\Delta x \Delta t} = 0 \\
    \mathbb{E}\left[\left<\dot{\mathcal{W}}^N, f\right>_{L^2} \left<\dot{\mathcal{W}}^N, g\right>_{L^2}\right] &= \sum_{i=1}^N \sum_{j=1}^N \underbrace{\mathbb{E}[z_i z_j]}_{= \delta_{ij}} f(x_i, t_i) g(x_j, t_j) \Delta x \Delta t = \sum_{i=1}^N f(x_i, t_i) g(x_i, t_i) \Delta x \Delta t.
\end{align}
Taking $N \rightarrow \infty$, these converge as
\begin{align}
    &\mathbb{E}\left[\left<\dot{\mathcal{W}}^N, f\right>_{L^2}\right] \rightarrow 0 \\
    &\mathbb{E}\left[\left<\dot{\mathcal{W}}^N, f\right>_{L^2} \left<\dot{\mathcal{W}}^N, g\right>_{L^2}\right] = \sum_{i=1}^N f(x_i, t_i) g(x_i, t_i) \Delta x \Delta t \rightarrow \iint f(x, t) g(x, t) \mathrm{d} x \mathrm{d} t = \left<f, g\right>_{L^2},
\end{align}
where the latter follows from the definition of Riemann integration.
Thus, the moments of $\left<\dot{\mathcal{W}}^N, f\right>_{L^2}$ converge to the moments of $\dot{\mathcal{W}}f$ as $N \rightarrow \infty$ and since $f, g \in L^2([0, T] 
\times \mathbb{R}, \mathbb{R})$ were chosen arbitrarily, we have the convergence in law
\begin{align}
    \dot{\mathcal{W}}^N \rightarrow \dot{\mathcal{W}},
\end{align}
which is sufficient for our purpose.

\newpage

\section{Visualisation of results}\label{eq:result-visualisations}
In this appendix, we plot the results produced by iterated INLA and all the baseline models considered in the PDE benchmark experiments (Section \ref{sec:pde-benchmark-experiments}). We display the predicted means and standard deviations for each method.

\subsection{Burgers' experiment}
\begin{figure}[ht]
    \begin{subfigure}[t]{0.245\textwidth}
        \centering
        \includegraphics[width=\textwidth]{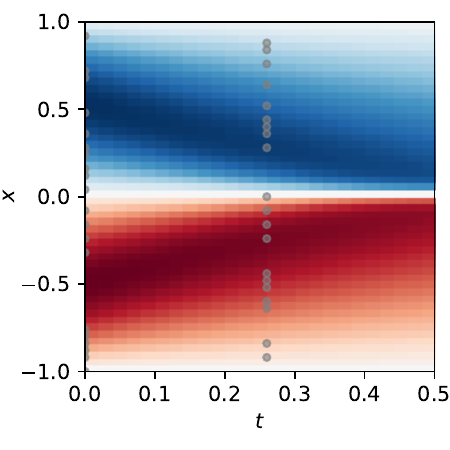}\vspace{-2mm}
        \caption{Ground truth}
    \end{subfigure}
    \begin{subfigure}[t]{0.245\textwidth}
        \centering
        \includegraphics[width=\textwidth]{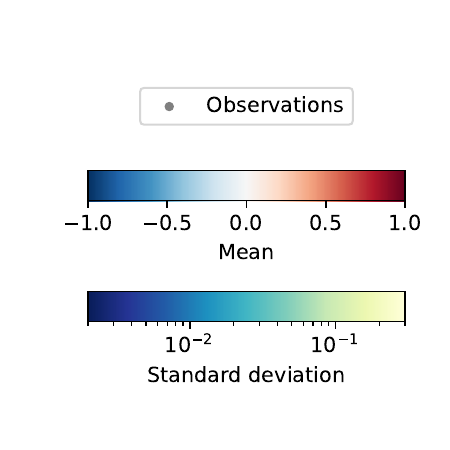}\vspace{-2mm}
    \end{subfigure} \\[8pt]
    \begin{subfigure}[t]{0.245\textwidth}
        \centering
        \includegraphics[width=\textwidth]{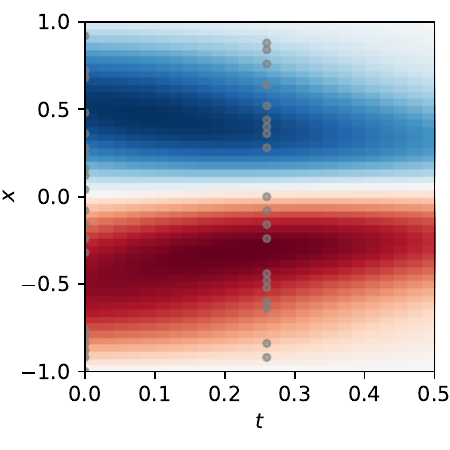}\vspace{-2mm}
        \caption{GPR (mean)}
    \end{subfigure}
    \begin{subfigure}[t]{0.245\textwidth}
        \centering
        \includegraphics[width=\textwidth]{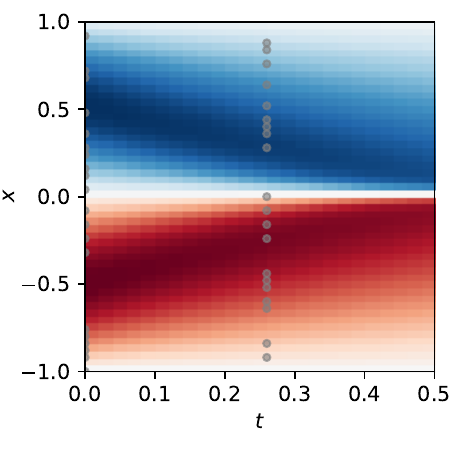}\vspace{-2mm}
        \caption{Iterated INLA II (mean)}
    \end{subfigure}
    \begin{subfigure}[t]{0.245\textwidth}
        \centering
        \includegraphics[width=\textwidth]{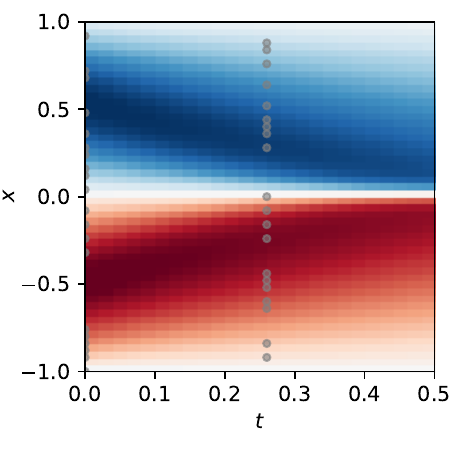}\vspace{-2mm}
        \caption{EnKS (mean)}
    \end{subfigure}
    \begin{subfigure}[t]{0.245\textwidth}
        \centering
        \includegraphics[width=\textwidth]{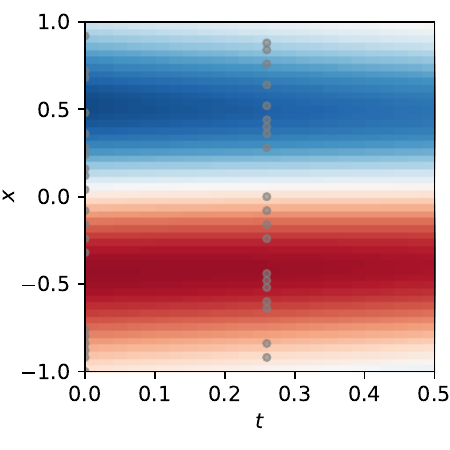}\vspace{-2mm}
        \caption{AutoIP (mean)}
    \end{subfigure} \\[8pt]
    \begin{subfigure}[t]{0.245\textwidth}
        \centering
        \includegraphics[width=\textwidth]{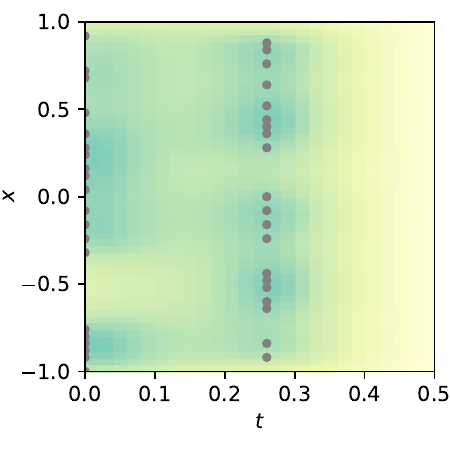}\vspace{-2mm}
        \caption{GPR (std. dev.)}
    \end{subfigure}
    \begin{subfigure}[t]{0.245\textwidth}
        \centering
        \includegraphics[width=\textwidth]{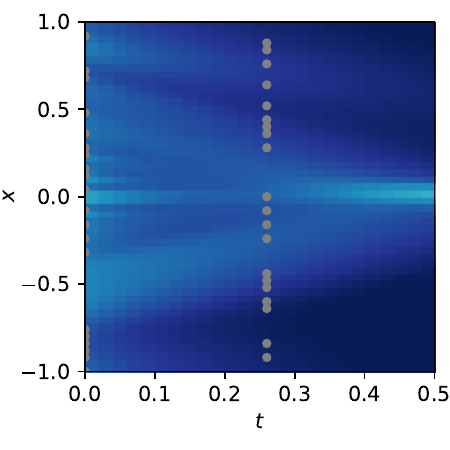}\vspace{-2mm}
        \caption{Iterated INLA II (std. dev.)}
    \end{subfigure}
    \begin{subfigure}[t]{0.245\textwidth}
        \centering
        \includegraphics[width=\textwidth]{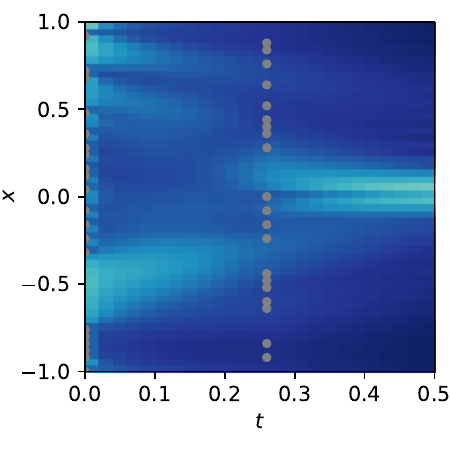}\vspace{-2mm}
        \caption{EnKS (std. dev.)}
    \end{subfigure}
    \begin{subfigure}[t]{0.245\textwidth}
        \centering
        \includegraphics[width=\textwidth]{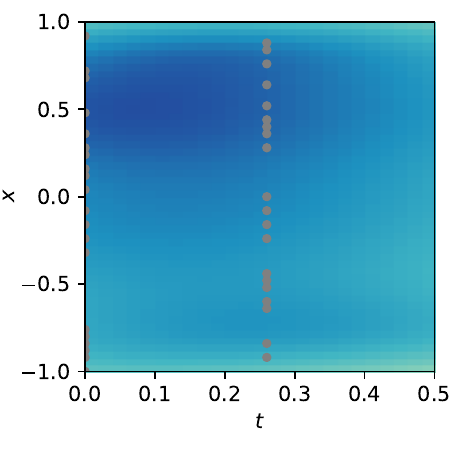}\vspace{-2mm}
        \caption{AutoIP (std. dev.)}
    \end{subfigure}
    \caption{Results on the Burgers' experiment}
\end{figure}

\newpage
\subsection{Allen-Cahn experiment}

\begin{figure}[ht]
    \begin{subfigure}[t]{0.245\textwidth}
        \centering
        \includegraphics[width=\textwidth]{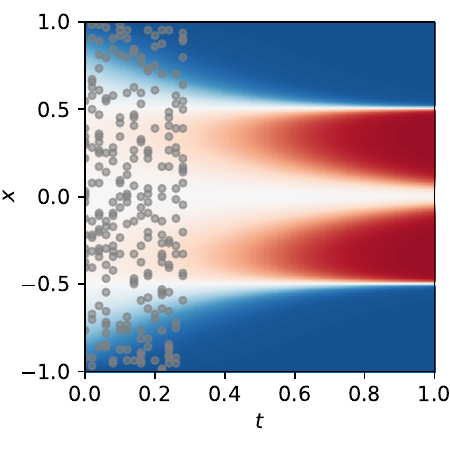}\vspace{-2mm}
        \caption{Ground truth}
    \end{subfigure}
    \begin{subfigure}[t]{0.245\textwidth}
        \centering
        \includegraphics[width=\textwidth]{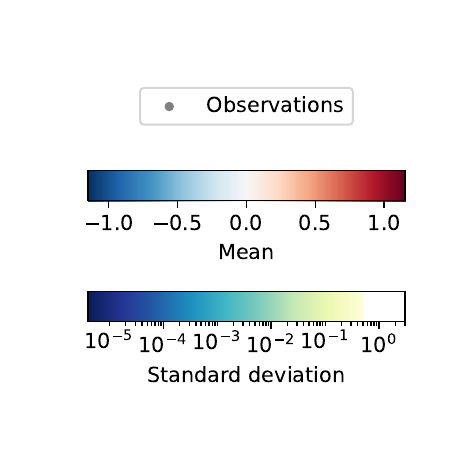}\vspace{-2mm}
    \end{subfigure} \\[8pt]
    \begin{subfigure}[t]{0.245\textwidth}
        \centering
        \includegraphics[width=\textwidth]{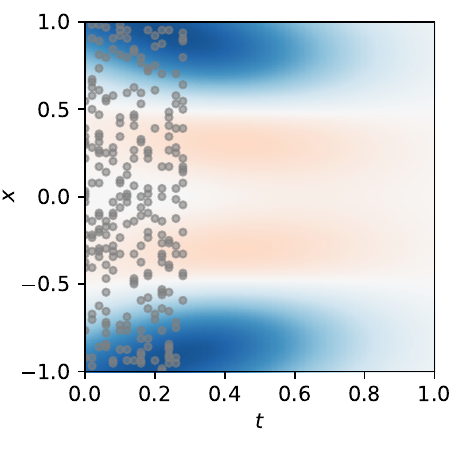}\vspace{-2mm}
        \caption{GPR (mean)}
    \end{subfigure}
    \begin{subfigure}[t]{0.245\textwidth}
        \centering
        \includegraphics[width=\textwidth]{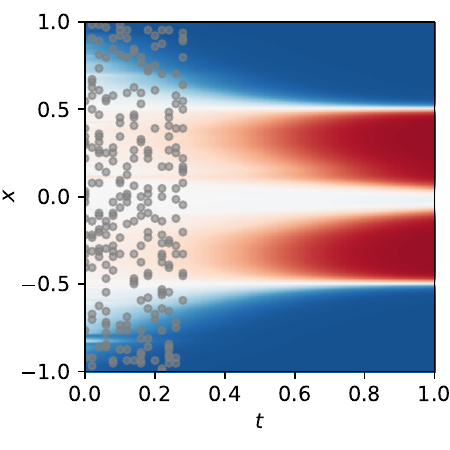}\vspace{-2mm}
        \caption{Iterated INLA II (mean)}
    \end{subfigure}
    \begin{subfigure}[t]{0.245\textwidth}
        \centering
        \includegraphics[width=\textwidth]{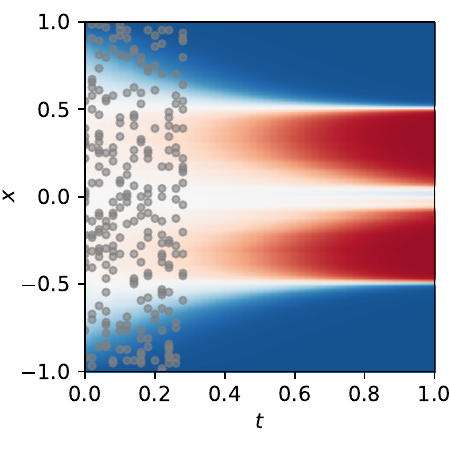}\vspace{-2mm}
        \caption{EnKS (mean)}
    \end{subfigure}
    \begin{subfigure}[t]{0.245\textwidth}
        \centering
        \includegraphics[width=\textwidth]{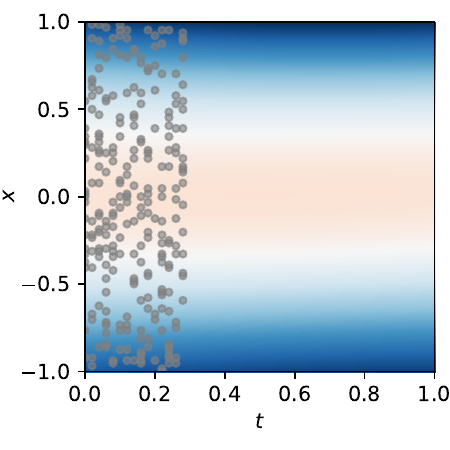}\vspace{-2mm}
        \caption{AutoIP (mean)}
    \end{subfigure} \\[8pt]
    \begin{subfigure}[t]{0.245\textwidth}
        \centering
        \includegraphics[width=\textwidth]{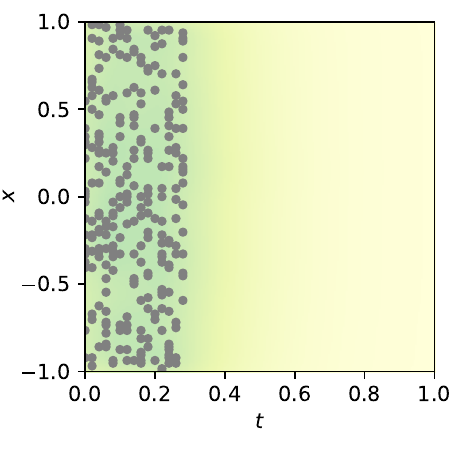}\vspace{-2mm}
        \caption{GPR (std. dev.)}
    \end{subfigure}
    \begin{subfigure}[t]{0.245\textwidth}
        \centering
        \includegraphics[width=\textwidth]{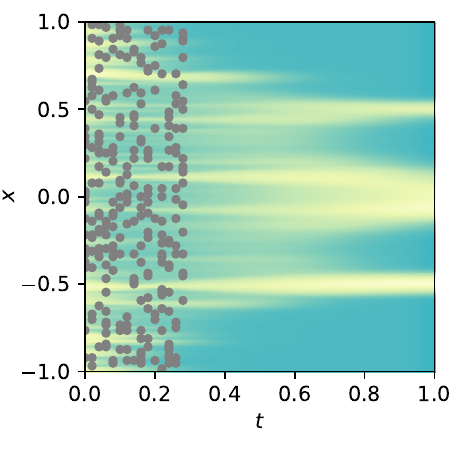}\vspace{-2mm}
        \caption{Iterated INLA II (std. dev.)}
    \end{subfigure}
    \begin{subfigure}[t]{0.245\textwidth}
        \centering
        \includegraphics[width=\textwidth]{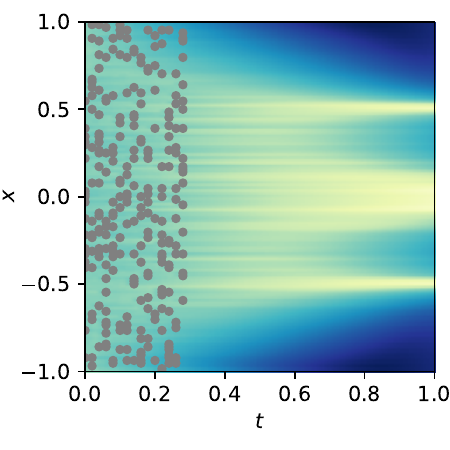}\vspace{-2mm}
        \caption{EnKS (std. dev.)}
    \end{subfigure}
    \begin{subfigure}[t]{0.245\textwidth}
        \centering
        \includegraphics[width=\textwidth]{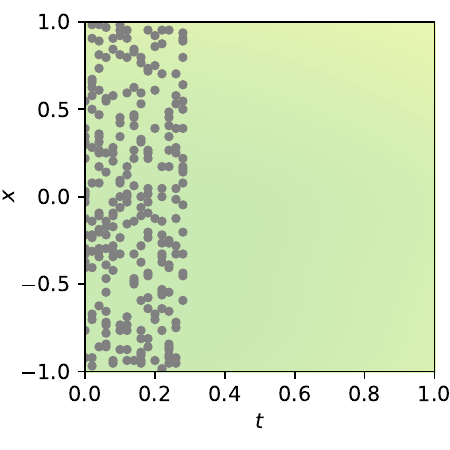}\vspace{-2mm}
        \caption{AutoIP (std. dev.)}
    \end{subfigure}
    \caption{Results on the Allen-Cahn experiment}
\end{figure}

\newpage
\subsection{Korteweg-de Vries experiment}

\begin{figure}[ht]
    \begin{subfigure}[t]{0.245\textwidth}
        \centering
        \includegraphics[width=\textwidth]{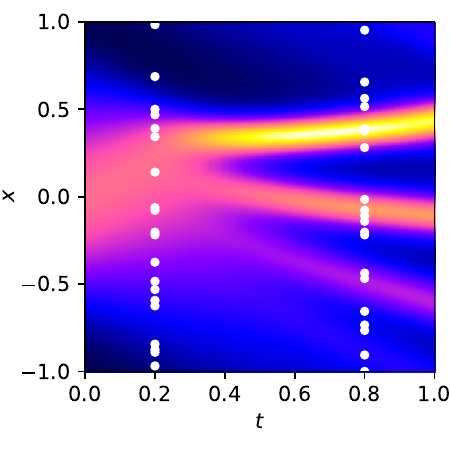}\vspace{-2mm}
        \caption{Ground truth}
    \end{subfigure}
    \begin{subfigure}[t]{0.245\textwidth}
        \centering
        \includegraphics[width=\textwidth]{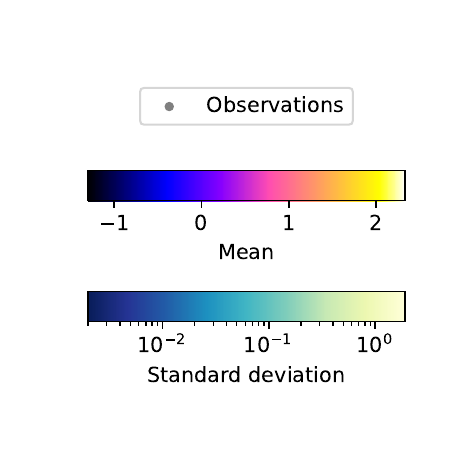}\vspace{-2mm}
    \end{subfigure} \\[8pt]
    \begin{subfigure}[t]{0.245\textwidth}
        \centering
        \includegraphics[width=\textwidth]{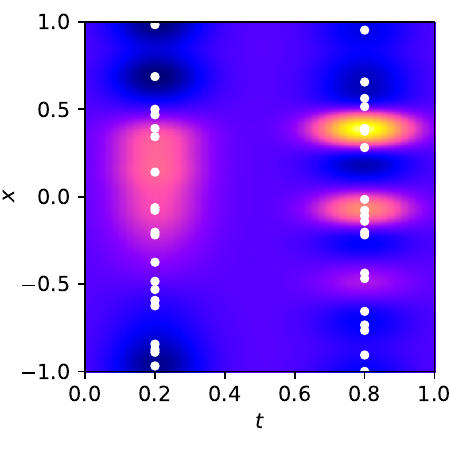}\vspace{-2mm}
        \caption{GPR (mean)}
    \end{subfigure}
    \begin{subfigure}[t]{0.245\textwidth}
        \centering
        \includegraphics[width=\textwidth]{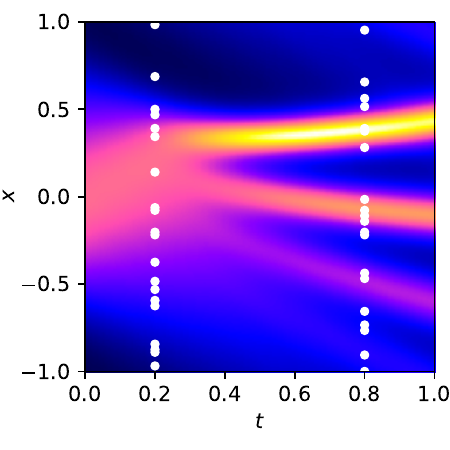}\vspace{-2mm}
        \caption{Iterated INLA II (mean)}
    \end{subfigure}
    \begin{subfigure}[t]{0.245\textwidth}
        \centering
        \includegraphics[width=\textwidth]{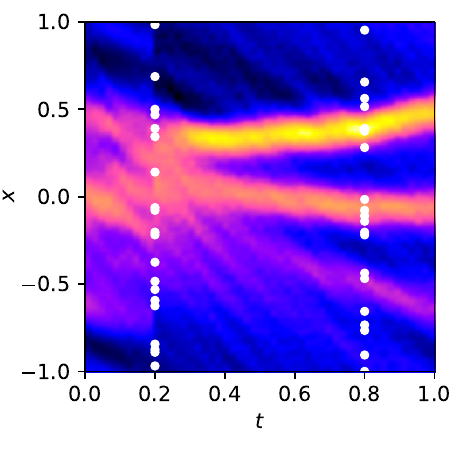}\vspace{-2mm}
        \caption{EnKS (mean)}
    \end{subfigure}
    \begin{subfigure}[t]{0.245\textwidth}
        \centering
        \includegraphics[width=\textwidth]{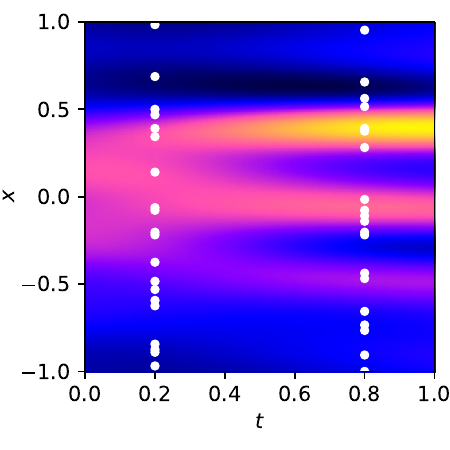}\vspace{-2mm}
        \caption{AutoIP (mean)}
    \end{subfigure} \\[8pt]
    \begin{subfigure}[t]{0.245\textwidth}
        \centering
        \includegraphics[width=\textwidth]{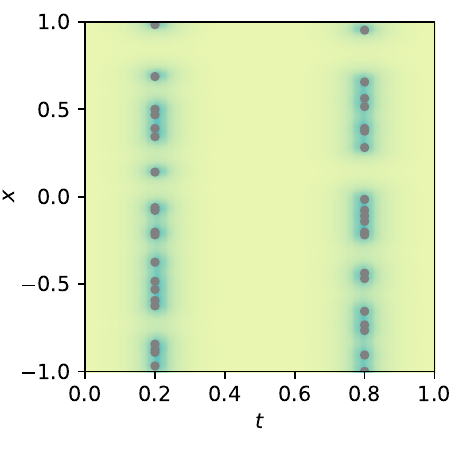}\vspace{-2mm}
        \caption{GPR (std. dev.)}
    \end{subfigure}
    \begin{subfigure}[t]{0.245\textwidth}
        \centering
        \includegraphics[width=\textwidth]{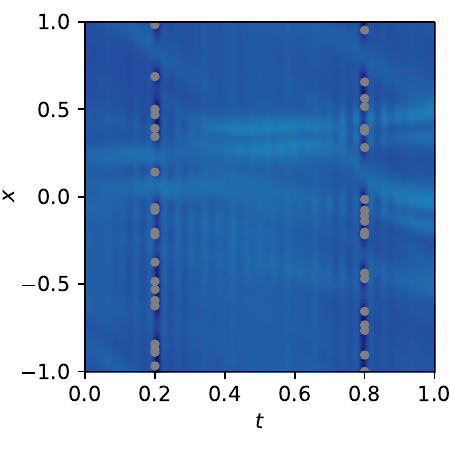}\vspace{-2mm}
        \caption{Iterated INLA II (std. dev.)}
    \end{subfigure}
    \begin{subfigure}[t]{0.245\textwidth}
        \centering
        \includegraphics[width=\textwidth]{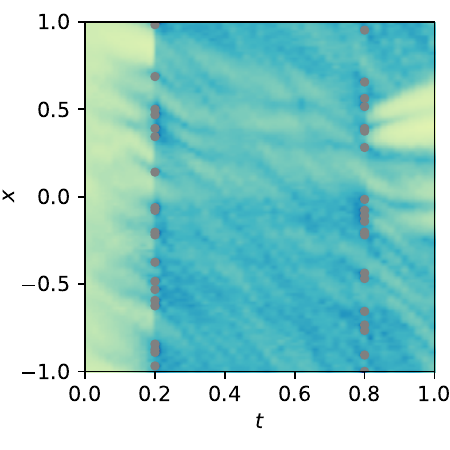}\vspace{-2mm}
        \caption{EnKS (std. dev.)}
    \end{subfigure}
    \begin{subfigure}[t]{0.245\textwidth}
        \centering
        \includegraphics[width=\textwidth]{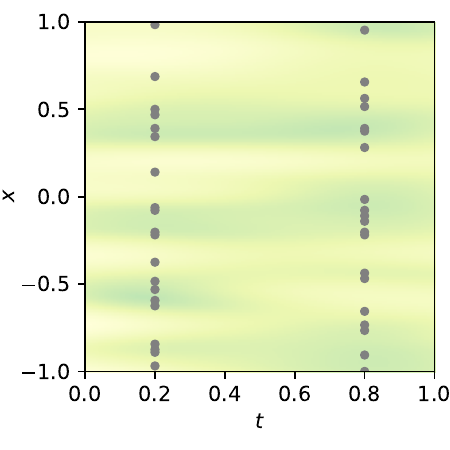}\vspace{-2mm}
        \caption{AutoIP (std. dev.)}
    \end{subfigure}
    \caption{Results on the KdV experiment}
\end{figure}

%%%%%%%%%%%%%%%%%%%%%%%%%%%%%%%%%%%%%%%%%%%%%%%%%%%%%%%%%%%%%%%%%%%%%%%%%%%%%%%
%%%%%%%%%%%%%%%%%%%%%%%%%%%%%%%%%%%%%%%%%%%%%%%%%%%%%%%%%%%%%%%%%%%%%%%%%%%%%%%

\end{document}